\newcommand{\cL}{\mathcal{L}}
\newcommand{\cS}{\mathcal{S}}
\newcommand{\cA}{\mathcal{A}}
\newcommand{\cT}{\mathcal{T}}
\newcommand{\bR}{\mathbb{R}}
\DeclareMathOperator*{\expect}{{\huge \mathbb{E}}}
\newcommand{\norm}[1]{\left \| #1 \right \|}
\newtheorem{defn}{Definition}
\newtheorem{lem}{Lemma}
\newtheorem{cor}{Corollary}
\newtheorem{thm}{Theorem}
\newcommand{\cbar}{\, | \,}
\def \grad {\nabla}
\DeclareMathOperator*{\argmin}{arg\,min}
\newcommand{\eqnref}[1]{(\ref{eqn:#1})}
\newtheorem{theorem}{Theorem}
\newtheorem{assumption}[theorem]{Assumption}
\newcommand{\carles}[1]{}
\newcommand{\marc}[1]{}
\def \dpi {{d_{\pi}}}
\def \dmu {{d_{\mu}}}
\def \Ddmu {D_{\dmu}}
\def \Ppi {P_{\pi}}
\def \hPpi {{\hat P}_{\pi}}
\def \hdpi {{\hat d}_{\pi}}
\def \hgamma {\hat \gamma}
\def \Ygamma {Y_{\hgamma}}
\def \Y {Y_{\hgamma}}
\def \Proj {\Pi}  
\def \Vpi {V^\pi}
\def \hVpi {{\hat V}^\pi}
\def \hVpid {{\hat V}^\pi_d}  
\def \cTpi {\cT_\pi}
\def \hV {\hat V}
\def \ratio {\frac{\dpi}{\dmu}}
\def \hratio {\frac{\hdpi}{\dmu}}
\newcommand{\citet}[1]{\citeauthor{#1} \shortcite{#1}}
\newcommand{\citep}[1]{(\citeauthor{#1} \citeyear{#1})}
\newcommand{\citenp}[1]{\citeauthor{#1} \citeyear{#1}}
\begin{document}
\title{Off-Policy Deep Reinforcement Learning by Bootstrapping the Covariate Shift}
\author{Carles Gelada, Marc G. Bellemare\\
Google Brain\\
cgel@google.com, bellemare@google.com}
\maketitle
\begin{abstract}
In this paper we revisit the method of off-policy corrections for reinforcement learning (COP-TD) pioneered by Hallak et al. (2017). Under this method, online updates to the value function are reweighted to avoid divergence issues typical of off-policy learning. While Hallak et al.'s solution is appealing, it cannot easily be transferred to nonlinear function approximation. First, it requires a projection step onto the probability simplex; second, even though the operator describing the expected behavior of the off-policy learning algorithm is convergent, it is not known to be a contraction mapping, and hence, may be more unstable in practice. We address these two issues by introducing a discount factor into COP-TD. We analyze the behavior of discounted COP-TD and find it better behaved from a theoretical perspective. We also propose an alternative soft normalization penalty that can be minimized online and obviates the need for an explicit projection step. We complement our analysis with an empirical evaluation of the two techniques in an off-policy setting on the game Pong from the Atari domain where we find discounted COP-TD to be better behaved in practice than the soft normalization penalty. Finally, we perform a more extensive evaluation of discounted COP-TD in 5 games of the Atari domain, where we find performance gains for our approach. 
\end{abstract}

\section{Introduction}

Central to reinforcement learning is the idea that an agent should learn from experience. While many algorithms learn in a purely online fashion, sample-efficient methods typically make use of past data, viewed either as a fixed dataset, or stored in a replay memory (\citenp{lin93scaling}, \citenp{mnih15human}). Because this past data may not be generated according to the policy currently under evaluation, the agent is said to be learning \emph{off-policy} \citep{sutton18reinforcement}.

By now it is well-documented that off-policy learning may carry a significant cost when combined to function approximation. Early results have shown that estimating the value function off-policy, using Bellman updates, may diverge \citep{baird95residual}, \citep{tsitsiklis97analysis}. More recently, value divergence was perhaps the most significant issue dealt with in the design of the DQN agent \citep{mnih15human}, and remains a source of concern in deep reinforcement learning \citep{vanhasselt16deep}.

Further, under off-policy learning, the quality of the Bellman fixed point suffers as studied by \citet{Kolter2011TheFP} and  \citet{munos03error}. The value function error can be unboundedly large even if the value function can be perfectly approximated. Hence, even in the case where convergence to the fixed point with off-policy data occurs, solutions can be of poor quality. Thus, the existing TD learning algorithms with convergence guarantees under off-policy data \citep{Maei2009ConvergentTL}, \citep{Sutton2009FastGM} can still suffer from off-policy issues.

This paper studies the covariate shift method for dealing with the off-policy problem. The covariate shift method, studied by \citet{hallak17consistent} and \citet{sutton16emphatic},
reweights online updates according to the ratio of the target and behavior stationary distributions. 
Under optimal conditions, the covariate shift method recovers convergent behavior with linear approximation, breaking what \citet{sutton18reinforcement} call the ``deadly triad'' of reinforcement learning. We argue the method is particularly appealing in the context of replay memories, where the reweighting can be replaced by a reprioritization scheme similar to that of \citet{schaul16prioritized}. 

We improve on \citeauthor{hallak17consistent}'s COP-TD algorithm, which has provable guarantees but is difficult to implement in a deep reinforcement learning setting. First, we introduce a discount factor into their update rule to obtain a more stable algorithm. Second, we develop an alternative normalization scheme that can be combined with deep networks, avoiding the projection step necessary in the original algorithm. 
We perform an empirical study of the two methods and their variants on the game Pong from the Arcade Learning Environment and find that our improvements give rise to significant benefits for off-policy learning.

\section{Background}
We study the standard RL setting in which an agent interacts with an environment by observing a state, selecting an action, receiving a reward, and observing the next state. We model this process with a Markov Decision Process $\langle \cS,\cA,R,P,\gamma \rangle$. Here, $\cS$, $\cA$ denote the state and action spaces and $P$ is the transition function. Throughout we will assume that $\cS$ and $\cA$ are finite and write $n := |\cS|$. A policy $\pi$ maps a state to a distribution over actions, $R : \cS \times \cA \to \bR$ is the reward function, and $\gamma$ is the discount factor.

We are interested in the \emph{policy evaluation} problem, where we seek to learn the value function $V^\pi$ of a policy from samples. The value function is the expected sum of discounted rewards from a state when following policy $\pi$:
\begin{equation*}
V^\pi(s) := \expect  \big[ \sum_{t=0}^\infty \gamma^t R(s_t, a_t) | s_0 = s \big],
\end{equation*}
where each action is drawn from the policy $\pi$, i.e. $a_t \sim \pi(\cdot \cbar s_t)$, and states are drawn from the transition function: $s_{t+1} \sim P(\cdot \cbar s_t, a_t)$. We combine the policy $\pi$ and transition function $P$ into a state-to-state transition function $\Ppi$, whose entries are
\begin{equation*}
\Ppi(s' \cbar s) := \sum_{a \in \cA} \pi(a \cbar s) P(s' \cbar s, a) .
\end{equation*}

Let $r_\pi(s) := \expect_{a \sim \pi} R(s,a)$ be the expected reward under $\pi$.
One of the key properties of the value function $V^\pi$ is that it satisfies the \emph{Bellman equation}:
\begin{equation*}
V^\pi(s) = r_\pi(s) + \gamma \expect_{s' \sim \Ppi} V^\pi(s') .
\end{equation*}
In vector notation \citep{puterman94markov}, this becomes
\begin{equation*}
V^\pi = r_\pi + \gamma \Ppi V^\pi,
\end{equation*}
where $V^\pi \in \bR^n$, $r^\pi \in \bR^n$, and $\Ppi \in \bR^{n \times n}$.
The value function is in fact the fixed point of the \emph{Bellman operator} $\cTpi : \bR^n \to \bR^n$, defined as
\begin{equation*}
\cTpi V := r_\pi + \gamma \Ppi V .
\end{equation*}
The Bellman operator describes a single step of dynamic programming \citep{bellman57dynamic} or \emph{bootstrap}; the process $V^{k+1} := \cTpi V^k$ converges to $V^\pi$. More interestingly for us, the operator also describes the expected behavior of learning rules such as temporal-difference learning \citep{sutton88learning} and consequently their learning dynamics \citep{tsitsiklis97analysis}. In the sequel, whenever we analyze the behavior of operators it is with this relationship in mind.

In this paper we consider the process of learning $V^\pi$ from samples drawn from $P$ and a \emph{behavior policy} $\mu$. Following standard usage, we call this process \emph{off-policy learning} \citep{sutton18reinforcement}.

Let $d \in \bR^n$. We write $D_d$ for the corresponding diagonal matrix. For a matrix $A \in \bR^{n \times n}$, the $A$-weighted squared seminorm of a vector $x \in \bR^n$ is $\norm{x}^2_A := \norm{A x}^2 = x^\top A^\top A x$. We specialize this notation to vectors in $\bR^n$ as $\norm{x}^2_d := \sum_{i=1}^n d(i) x(i)^2$.
We write $e$ for the vector of all ones and
$\Delta(\cS)$ for the simplex over states ($d \in \Delta(\cS) \implies d^\top e = 1, d \ge 0)$.
Finally, recall that $d \in \Delta(\cS)$ is the stationary distribution of a transition function $P$ if and only if
\begin{equation*}
d = d P .
\end{equation*}
This distribution is unique when $P$ defines a Markov chain with a single recurrent class (called a unichain, \citenp{meyn12markov}). Throughout we will write $\dpi$ and $\dmu$ for the stationary distributions of $\Ppi$ and $P_\mu$, respectively.

\subsection{Off-Policy Learning with Linear Approximation}

In most practical applications the size of the state space precludes so-called tabular representations, which learn the value of each state separately. Instead, one must approximate the value function. One common scheme is \emph{linear function approximation}, which uses a mapping from states to features
$\phi : \cS \to \bR^k$. The approximate value function at $s$ is then the inner product of a feature vector with a vector of weights $\theta \in \bR^k$:
\begin{equation}\label{eqn:linear_approximation_state}
\hat V(s) = \phi(s)^\top \theta .
\end{equation}
If $\Phi \in \bR^{n \times k}$ denotes the matrix of row-feature vectors, \eqnref{linear_approximation_state} becomes, in vector notation:
\begin{equation*}
\hat V = \Phi \theta .
\end{equation*}

The \emph{semi-gradient update rule} for TD learning \citep{sutton18reinforcement} learns an approximation of $\Vpi$ from sample transitions. Given a starting state $s \in \cS$, a successor state $s' \sim \Ppi(\cdot \cbar s)$, and a step-size parameter $\alpha > 0$, this update is
\begin{equation}\label{eqn:semigradient}
\theta \gets \theta + \alpha \left [ r_\pi(s) + \gamma \phi(s')^\top \theta - \phi(s)^\top \theta \right ] \phi(s) .
\end{equation}
While \eqnref{semigradient} does not correspond to a proper gradient descent procedure (see e.g. \citenp{barnard93temporaldifference}), it can be shown to converge, as we shall now see.

The expected behavior of the semi-gradient update rule is described by the \emph{projected Bellman operator}, denoted $\Proj_d \cTpi$ for some distribution $d \in \Delta(\cS)$ \citep{tsitsiklis97analysis}. The projected Bellman operator is the combination of the usual Bellman operator with a projection $\Proj_d$ in norm $\norm{\cdot}_d$ onto the span of $\Phi$. Typically, the learning rule \eqnref{semigradient} is studied in an online setting, where samples correspond to an agent sequentially experiencing the environment. In the simplest case where an agent follows a single behavior policy $\mu$, this corresponds to $d = \dmu$.

The stationary point of \eqnref{semigradient}, if it exists, is the solution of the \emph{projected Bellman equation}
\begin{equation*}
\hVpi = \Proj_d \cTpi \hVpi .
\end{equation*}
When the projection is performed under the stationary distribution $\dpi$, \eqnref{semigradient} converges to this fixed point provided $\alpha$ is taken to satisfy the Robbins-Monro conditions and other mild assumptions(see \citet{tsitsiklis97analysis}). Taking $d \ne \dpi$, however, may lead to divergence of the weight vector in \eqnref{semigradient}. A sign of the importance of this issue can be seen in \citeauthor{sutton18reinforcement}'s choice to dub ``deadly triad'' the combination of
off-policy learning, function approximation, and bootstrapping.

A prerequisite to guarantee the convergence of \eqnref{semigradient} to $\hVpi$ is that
\begin{equation}
\hV^{k+1} := \Proj_d \cTpi \hV^k
\end{equation}
should also converge for any initial condition $\hV^0 \in \bR^n$. \citeauthor{tsitsiklis97analysis} proved convergence when $d = \dpi$ by showing that the projected Bellman operator is a contraction in $\dpi$-weighted norm. That is, for any $V, V' \in \bR^n$,
\begin{equation*}
\norm{\Proj_\dpi \cTpi V - \Proj_\dpi \cTpi V'}_\dpi \le \gamma \norm{V - V'}_\dpi,
\end{equation*}
from which an application of Banach's fixed point theorem allows us to conclude that $\hV^k \to \hVpi$. More formally, the result follows from noting that the induced operator norm of $\Proj_\dpi \Ppi  \leq 1$. The lack of a similar result when $d \ne \dpi$ explains the divergence, and is by now well-documented in the literature \citep{baird95residual}.

Independent of the convergence issues raised by off-policy learning, the fixed point of the Bellman equation with linear function approximation is also affected.  A metric for the quality of a value function $\hat{V}(s)$ is $\expect_{s \sim \dpi} [(\hat{V}(s) - \Vpi(s))^2] = \norm{\hat{V} - \Vpi}_\dpi$, the expected value prediction error sampling states from the stationary distribution of the policy evaluated. Under some conditions, we can bound the quality of the fixed point under off-policy data as a constant factor times the optimal prediction error $\norm{\Pi_{\dpi} \Vpi - \Vpi}_{\dpi}$.
\begin{restatable}{thm}{approximationError}[Based on \citenp{munos03error}]\label{thm:approximation_error}
Let $d \in \Delta(\cS)$ be some arbitrary distribution. Suppose that $\norm{\Pi_d P_{d_\pi}}_{d_\pi} < 1/\gamma$ and there is a fixed point $\hVpid$ to the projected Bellman equation $V := \Proj_d \cTpi V$. Then its approximation error in $\dpi$-weighted norm is at most
\begin{equation*}
\norm{\hVpid - \Vpi}_{\dpi} \le \frac{\norm{\Pi_{d} V^\pi - V^\pi}_{d_\pi}}{1- \gamma \norm{\Pi_d \Ppi}_{d_\pi}}.
\end{equation*}
Furthermore, this error is minimized when $d = \dpi$.
\end{restatable}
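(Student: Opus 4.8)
The plan is to bound the error by playing the two fixed-point equations satisfied by $\hVpid$ and $\Vpi$ against each other, and to split the error through the projection of the true value function. Write $\hV := \hVpid$ for brevity. By hypothesis $\hV = \Pi_d \cTpi \hV$, while $\Vpi = \cTpi \Vpi$ gives $\Pi_d \Vpi = \Pi_d \cTpi \Vpi$. First I would insert $\Pi_d \Vpi$ and apply the triangle inequality in $\dpi$-weighted norm,
\begin{equation*}
\norm{\hV - \Vpi}_{\dpi} \le \norm{\hV - \Pi_d \Vpi}_{\dpi} + \norm{\Pi_d \Vpi - \Vpi}_{\dpi},
\end{equation*}
so that the second term is already the numerator of the claimed bound and all the remaining work is to control the first term. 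Note that when $d \ne \dpi$ the two pieces are not $\dpi$-orthogonal, so the plain triangle inequality (rather than a Pythagorean identity) is the right tool, which is what produces the $1 - \gamma\norm{\Pi_d \Ppi}_{\dpi}$ shape of the denominator.

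The key identity comes from subtracting the two projected equations. Since $\cTpi$ is affine with linear part $\gamma \Ppi$,
\begin{equation*}
\hV - \Pi_d \Vpi = \Pi_d\big(\cTpi \hV - \cTpi \Vpi\big) = \gamma\, \Pi_d \Ppi (\hV - \Vpi).
\end{equation*}
Taking $\dpi$-norms and passing to the induced operator norm gives $\norm{\hV - \Pi_d \Vpi}_{\dpi} \le \gamma \norm{\Pi_d \Ppi}_{\dpi}\, \norm{\hV - \Vpi}_{\dpi}$. Substituting into the triangle inequality and collecting the $\norm{\hV - \Vpi}_{\dpi}$ terms yields
\begin{equation*}
\big(1 - \gamma \norm{\Pi_d \Ppi}_{\dpi}\big)\, \norm{\hV - \Vpi}_{\dpi} \le \norm{\Pi_d \Vpi - \Vpi}_{\dpi}.
\end{equation*}
The hypothesis $\gamma \norm{\Pi_d \Ppi}_{\dpi} < 1$ makes the bracket strictly positive, so dividing through gives the stated inequality.

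For the ``furthermore'' claim I would argue factor by factor. The numerator $\norm{\Pi_d \Vpi - \Vpi}_{\dpi}$ is minimized at $d = \dpi$: the $\dpi$-orthogonal projection $\Pi_{\dpi}$ is by definition the element of the span of $\Phi$ closest to $\Vpi$ in $\dpi$-norm, hence $\norm{\Pi_d \Vpi - \Vpi}_{\dpi} \ge \norm{\Pi_{\dpi}\Vpi - \Vpi}_{\dpi}$ for every $d$. For the denominator, at $d = \dpi$ the projection is orthogonal in the very norm in which $\Ppi$ is non-expansive (its stationary distribution), so as recalled in the excerpt $\norm{\Pi_{\dpi}\Ppi}_{\dpi} \le 1$ and the bracket is at least $1-\gamma$; this is exactly the familiar on-policy contraction regime and certifies that the bound is finite with no further assumption.

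The main obstacle is making the denominator comparison fully rigorous. To conclude that the entire ratio, and not merely its numerator, is smallest at $d = \dpi$, one must also control $\norm{\Pi_d \Ppi}_{\dpi}$ for $d \ne \dpi$, and this is delicate: an oblique projection $\Pi_d$ need not be non-expansive in $\dpi$-norm, so $\gamma \norm{\Pi_d \Ppi}_{\dpi}$ can exceed one and the bound can degrade or become vacuous. I would therefore lean on the fact that $d = \dpi$ is the unique choice for which the projection is orthogonal in the \emph{evaluation} norm, which simultaneously delivers the best approximation term and the guaranteed $1-\gamma$ contraction factor, and I would treat the behavior of the general-$d$ denominator as the step demanding the most care.
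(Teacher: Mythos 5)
Your proof of the main inequality is correct, and it reaches the bound by a mechanically different route than the paper. The paper derives the single algebraic identity $\Pi_d \Vpi - \Vpi = (I - \gamma \Pi_d \Ppi)(\hVpid - \Vpi)$ (from exactly the two projected fixed-point equations you use), then inverts it, expanding $(I - \gamma \Pi_d \Ppi)^{-1}$ as a Neumann series $\sum_{t \ge 0} (\gamma \Pi_d \Ppi)^t$ and bounding term by term to obtain the geometric factor $1/(1 - \gamma \norm{\Pi_d \Ppi}_{\dpi})$. Your triangle-inequality-plus-rearrangement argument gets the same bound without ever invoking invertibility or series convergence: the hypothesis $\gamma \norm{\Pi_d \Ppi}_{\dpi} < 1$ enters only at the end, to make the bracket $1 - \gamma \norm{\Pi_d \Ppi}_{\dpi}$ positive so you may divide. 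Both proofs hinge on the identical key step, namely that subtracting $\hVpid = \Proj_d \cTpi \hVpid$ from $\Pi_d \Vpi = \Pi_d \cTpi \Vpi$ and using affineness of $\cTpi$ yields $\gamma \Pi_d \Ppi (\hVpid - \Vpi)$; yours is the slightly more elementary packaging.

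On the ``furthermore'' claim, you correctly identified the difficulty and left it open; be aware that the paper does not resolve it in the form you posed it either --- it sidesteps it. As you suspect, one cannot in general prove $\norm{\Pi_{\dpi} \Ppi}_{\dpi} \le \norm{\Pi_d \Ppi}_{\dpi}$ for every $d$ (an oblique projection's kernel can align favorably with the range of $\Ppi$), so minimizing the displayed ratio factor by factor does not go through. What the paper actually does is relax the bound once more: by submultiplicativity and the fact that $\norm{\Ppi}_{\dpi} \le 1$ (Jensen's inequality, $\dpi$ being stationary for $\Ppi$), one has $\norm{\Pi_d \Ppi}_{\dpi} \le \norm{\Pi_d}_{\dpi}$ and hence
\begin{equation*}
\norm{\hVpid - \Vpi}_{\dpi} \le \frac{\norm{\Pi_{d} \Vpi - \Vpi}_{\dpi}}{1- \gamma \norm{\Pi_d \Ppi}_{\dpi}} \le \frac{\norm{\Pi_{d} \Vpi - \Vpi}_{\dpi}}{1- \gamma \norm{\Pi_d}_{\dpi}},
\end{equation*}
and it is this weaker bound that is minimized at $d = \dpi$ factor by factor: the numerator because $\Pi_{\dpi}$ is the best approximation in $\dpi$-norm (your argument), and the denominator because any nonzero projection satisfies $\norm{\Pi_d}_{\dpi} \ge 1$ while the orthogonal one achieves $\norm{\Pi_{\dpi}}_{\dpi} = 1$. (This relaxation is itself informal --- it tacitly assumes $1 - \gamma\norm{\Pi_d}_{\dpi} > 0$ --- so the paper's optimality statement is best read as applying to the relaxed bound rather than the displayed one.) If you adopt this relaxation, your factor-by-factor argument closes completely; without it, the comparison of denominators across $d$ that worried you is indeed the unresolvable step.
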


Theorem \ref{thm:approximation_error} is interesting because it suggests that $\dpi$ is also the optimal in the sense that it yields the smallest approximation bound. \citet{Kolter2011TheFP} showed that when Theorem \ref{thm:approximation_error} does not apply (because $\norm{\Pi_d \Ppi}_{d_\pi} \geq 1/\gamma$) it is possible to construct examples where the fixed point error is unbounded, even if $\norm{\Pi_d \Vpi - \Vpi}_\dpi = 0$ (i.e. cases where a perfect solution exists). Thus, no general bound on the quality of the off-policy fixed point exists.

Not only do we expect that improvements in off-policy learning should lead to more stable learning behavior, but also to the improved quality of the value functions which, in the control setting, should translate to increases in performance. In this paper we will study the covariant shift approach to off-policy learning, where updates in \eqnref{semigradient} are reweighting so as to induce a projection under $\dpi$.
 
\section{The Covariate Shift Approach}

Suppose that the stationary distributions $\dpi$ and $\dmu$ are known, and that states are updated according to a distribution $s \sim \dmu$. We use importance sampling (e.g. \citenp{precup00eligibility}) to define the update rule
\begin{equation*}
\theta \gets \theta + \alpha \frac{\dpi(s)}{\dmu(s)} \left [ r(s, a) + \gamma \phi(s')^\top \theta - \phi(s)^\top \theta \right ] \phi(s)^\top,
\end{equation*}
where as before $a \sim \mu(\cdot \cbar s), s' \sim P(\cdot \cbar s, a)$, is equivalent to applying the semi-gradient update rule \eqnref{semigradient} \emph{under the sampling distribution $\dpi$}. Further multiplying the update term by $\frac{\pi(a \cbar s)}{\mu(a \cbar s)}$, we recover (in expectation) the semi-gradient update rule for learning $\Vpi$, under the sampling distribution $\dpi$ \citep{hallak17consistent}. Thus, provided we reweighted updates correctly, we obtain a provably convergent off-policy algorithm.

The \emph{COP-TD} learning rule proposed by \citeauthor{hallak17consistent} learns the ratio $\ratio$ from samples. Although much of the original work is concerned with the combined learning dynamics of the value and the ratio, we will focus on the process by which this ratio is learned.

Similar to temporal difference learning, COP-TD estimates $\ratio$ by bootstrapping from a previous prediction. Given a step-size $\alpha > 0$, a ratio vector $c \in \bR^n$ and a sample transition $(s, a, s')$ where $s \sim \dmu$, $a \sim \mu(\cdot \cbar s)$, and $s' \sim P(\cdot \cbar s, a)$, COP-TD performs the following update:
\begin{equation}\label{eqn:cop_update_rule}
c(s') \gets c(s') + \alpha \left [ \frac{\pi(a \cbar s)}{\mu (a \cbar s)} c(s) - c(s') \right ] .
\end{equation}
Note that this update rule learns ``in reverse'' compared to TD learning. 
The expected behavior of the update rule is captured by the \emph{COP operator} $Y$:
\begin{equation*}
(Yc)(s') := \expect_{s \sim \dmu, a \sim \mu} \left [ \frac{\pi(a \cbar s)}{\mu (a \cbar s)} c(s) \cbar s' \right ] .
\end{equation*}
In vector notation, this operator is:
\begin{equation}\label{eqn:cop_operator}
Y c = \Ddmu^{-1} \Ppi^\top \Ddmu c .
\end{equation}
Any multiple of $\ratio$ is a fixed point of $Y$: $Y \beta \ratio = \beta \ratio$, for $\beta \in \bR$.
\citeauthor{hallak17consistent}, under the assumption that the transition matrix $\Ppi$ has a full set of real eigenvectors, give a partial proof that the iterates $c^{k+1} := Y c^k$ converge to such a fixed point. Our first result is to provide an alternative proof of convergence that does not require this assumption.
\begin{restatable}{thm}{copOperatorConvergence}\label{thm:cop_operator_convergence}
Suppose that $\Ppi$ defines an ergodic Markov chain on the state space $\cS$, and let $c^0 \in \Delta$. Then the process $c^{k+1} = Y c^{k}$ converges to $C \ratio$, where $C \in \bR$ is a positive scalar.
\end{restatable}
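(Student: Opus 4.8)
The plan is to exploit the fact that $Y$ is a similarity transform of the transposed transition matrix. Writing $Y = \Ddmu^{-1}\Ppi^\top\Ddmu$, we see that $Y$ and $\Ppi^\top$ are conjugate via $\Ddmu$, so they share the same spectrum, which in turn coincides with that of $\Ppi$. I would therefore reduce the statement to the spectral structure of the stochastic matrix $\Ppi$ and then transport it back through the conjugation.

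First I would record the relevant eigendata. Since $\Ppi$ is the transition matrix of an ergodic chain, Perron--Frobenius guarantees that $1$ is a simple eigenvalue and every other eigenvalue lies strictly inside the unit disk. The right eigenvector of $Y$ for eigenvalue $1$ is exactly the ratio: using $\Ddmu\,\ratio = \dpi$ and $\Ppi^\top\dpi = \dpi$ (the stationary equation $\dpi = \dpi\Ppi$ in transposed form) one checks $Y\,\ratio = \ratio$. Dually, the left eigenvector is $\dmu$, since substituting the definition of $Y$ into $\dmu^\top Y = \dmu^\top$ reduces it to $\Ppi e = e$, which holds because $\Ppi$ is stochastic. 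These two eigenvectors are biorthogonally normalized already, as $\dmu^\top\,\ratio = \sum_{s}\dpi(s) = 1$.

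Next I would establish convergence by a spectral-projection argument. Decompose $\bR^n$ as the direct sum of the eigenline $\mathrm{span}(\ratio)$ and the complementary $Y$-invariant subspace $W$ collecting the generalized eigenspaces of all eigenvalues of modulus strictly below $1$. Writing $c^0 = C\,\ratio + w$ with $w \in W$ and $C = \dmu^\top c^0$ (the coefficient forced by biorthogonality), invariance gives $Y^k c^0 = C\,\ratio + Y^k w$; since the spectral radius of $Y$ restricted to $W$ is strictly below $1$, the iterates of $w$ vanish and $Y^k c^0 \to C\,\ratio$. Positivity of the limiting scalar then follows from the hypothesis $c^0 \in \simplex$: because ergodicity forces $\dmu(s) > 0$ for every state, $C = \sum_s \dmu(s)\,c^0(s) > 0$ whenever $c^0$ is a nonzero distribution.

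The step I expect to be the main obstacle --- and the precise place where this proof improves on the argument of \citeauthor{hallak17consistent} --- is handling the case where $Y$ fails to be diagonalizable. One cannot simply expand $c^0$ in an eigenbasis, so the decay on $W$ must be justified through the Jordan form: even for a nontrivial Jordan block $J_\lambda$ with $|\lambda| < 1$ one has $J_\lambda^k \to 0$, since the polynomial growth of the nilpotent part is dominated by the geometric decay $|\lambda|^k$. Making this uniform over all subdominant blocks, so that $\norm{Y^k w} \to 0$ for every $w \in W$, is the only nontrivial analytic estimate; everything else is a consequence of Perron--Frobenius and the stochasticity of $\Ppi$.
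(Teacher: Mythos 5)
Your proof is correct, and it takes a genuinely different route from the paper's, even though both hinge on the same key identity: the conjugation $Y = \Ddmu^{-1}\Ppi^\top\Ddmu$. The paper uses that identity to push the iterates through the similarity, $Y^k c^0 = \Ddmu^{-1}(\Ppi^\top)^k \Ddmu c^0$, observes that $b_0 := \Ddmu c^0 / C$ with $C := \sum_s \dmu(s)c^0(s)$ is a probability vector, and then invokes the standard convergence theorem for finite ergodic Markov chains, $(\Ppi^\top)^k b_0 \to \dpi$, as a black box, after which the limit $C\ratio$ falls out. You instead re-derive that convergence from scratch: Perron--Frobenius gives simplicity of the eigenvalue $1$ and a spectral gap, the left/right eigenpair $(\dmu, \ratio)$ with $\dmu^\top \ratio = 1$ gives the spectral projection, and the Jordan-block estimate kills the component in the complementary invariant subspace $W$. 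Your coefficient $C = \dmu^\top c^0$ is exactly the paper's constant, a good consistency check. What your argument buys: it is self-contained (no appeal to the cited Markov-chain results), it applies verbatim to arbitrary $c^0 \in \bR^n$ (the hypothesis $c^0 \in \Delta$ enters only to conclude $C > 0$), it makes the geometric rate visible (governed by the subdominant eigenvalue modulus, up to polynomial factors from nontrivial Jordan blocks), and it confronts non-diagonalizability head-on --- which is precisely the deficiency in \citeauthor{hallak17consistent}'s original argument that this theorem is meant to repair; the paper repairs it differently, by avoiding eigendecompositions altogether. What the paper's route buys is brevity: one line of algebra plus a citation. One small attribution slip on your side: the positivity $\dmu(s) > 0$ that you need for $C > 0$ (and indeed for $\Ddmu^{-1}$, hence for $Y$ itself, to exist) comes from the standing assumptions on the behavior chain $P_\mu$, not from the stated ergodicity of $\Ppi$; both proofs rely on it implicitly, so nothing breaks, but the justification should point at $\mu$ rather than $\pi$.
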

\begin{restatable}{cor}{normalizedOperatorConvergence}\label{cor:normalized_operator_convergence}
Suppose that the conditions of Theorem \ref{thm:cop_operator_convergence} are met. Define the \emph{normalized COP operator}
\begin{equation*}
(\bar Y c)(s') := \frac{\tilde c (s')}{\sum_{s} \tilde c(s)}  \qquad \tilde c := Y c .
\end{equation*}
Then the unique fixed point of the operator $\bar Y$ is the ratio $\ratio$, to which the process $c^{k+1} := \bar Y c^k$ converges.
\end{restatable}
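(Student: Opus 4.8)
The plan is to reduce the behaviour of $\bar Y$ to that of the linear operator $Y$, whose convergence is already established by Theorem \ref{thm:cop_operator_convergence}, and then to invoke Perron--Frobenius theory to pin down uniqueness. The first step is a bookkeeping observation: $\bar Y$ is just $Y$ followed by a rescaling, and since $Y$ is linear this rescaling commutes through iterated application. Concretely, I would show by induction that for any $c^0 \in \Delta$,
\begin{equation*}
\bar Y^k c^0 = \frac{Y^k c^0}{e^\top Y^k c^0},
\end{equation*}
the point being that the scalar $1/(e^\top Y c)$ introduced at each step is homogeneous of degree $-1$ and therefore cancels against the linearity of $Y$ at the next application. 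For this to be well defined I must check that $e^\top Y^k c^0 > 0$ at every step. This follows from ergodicity: $\dmu > 0$ so $\Ddmu^{-1}$ is positive, every row of $\Ppi$ has a positive entry, and hence $Y$ maps the nonnegative cone (minus the origin) into vectors of strictly positive total mass. In particular $\bar Y$ is a well-defined map $\Delta \to \Delta$.

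With this identity in hand, convergence is immediate. Theorem \ref{thm:cop_operator_convergence} gives $Y^k c^0 \to C\ratio$ with $C > 0$, and since $e^\top(C\ratio) = C\, e^\top \ratio > 0$, the normalization is continuous at the limit, so
\begin{equation*}
\bar Y^k c^0 \to \frac{C\ratio}{e^\top(C\ratio)} = \frac{\ratio}{e^\top \ratio} =: \bar c_\pi,
\end{equation*}
the unique representative of $\ratio$ lying in $\Delta$. One checks directly that $\bar c_\pi$ is a fixed point: $Y\bar c_\pi = \bar c_\pi$ because $\Ppi^\top \dpi = \dpi$ (stationarity of $\dpi$), and $\bar c_\pi$ already sums to one, so the final normalization leaves it unchanged.

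The substantive step is uniqueness, and here I would argue as follows. If $c \in \Delta$ satisfies $\bar Y c = c$, then $Y c = (e^\top Y c)\, c$, so $c$ is a nonnegative eigenvector of $Y$ with eigenvalue $\lambda = e^\top Yc > 0$. Now $Y = \Ddmu^{-1}\Ppi^\top \Ddmu$ is similar to $\Ppi^\top$, the transpose of an irreducible (by ergodicity) stochastic matrix, whose spectral radius is $1$ with Perron eigenvector $\dpi$. By the Perron--Frobenius theorem the only nonnegative eigenvector of $\Ppi^\top$ is $\dpi$ (up to scale), so, since $\Ddmu^{-1}$ is a positive diagonal, the only nonnegative eigenvector of $Y$ is $\Ddmu^{-1}\dpi = \ratio$ (up to scale); intersecting with $\Delta$ forces $c = \bar c_\pi$. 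I expect this uniqueness argument to be the main obstacle, precisely because over all of $\bR^n$ any real eigenvector of $Y$ with nonzero coordinate sum yields, after rescaling, a fixed point of $\bar Y$; it is only the restriction to the simplex, together with the sign constraint furnished by Perron--Frobenius, that singles out $\ratio$.
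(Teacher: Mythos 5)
Your proof is correct, and its core step---exploiting the linearity of $Y$ so that the normalization can be deferred, $\bar Y^k c^0 = Y^k c^0/(e^\top Y^k c^0)$, and then invoking Theorem \ref{thm:cop_operator_convergence}---is exactly the paper's argument, which states in one line that $\bar Y^n = \alpha_n Y^n$ and reads off the limit from $Y^k c^0 \to C \ratio$. You go beyond the paper in two ways. First, the paper never actually establishes the ``unique fixed point'' clause; your Perron--Frobenius argument (a fixed point of $\bar Y$ in $\Delta$ is a nonnegative eigenvector of $Y$, which is similar to the irreducible matrix $\Ppi^\top$, hence a positive multiple of $\ratio$) supplies this, and your closing remark---that over all of $\bR^n$ every real eigenvector of $Y$ with nonzero eigenvalue and nonzero coordinate sum gives a fixed point of $\bar Y$ after rescaling---correctly identifies why the restriction to the simplex is essential rather than cosmetic. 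Second, your limit $\ratio/(e^\top \ratio)$ differs from the claimed fixed point $\ratio$, and here you have exposed an inconsistency in the paper rather than committed an error: with the unweighted normalization $\sum_s \tilde c(s)$ as literally displayed, $\ratio$ is \emph{not} a fixed point unless $\sum_s \dpi(s)/\dmu(s) = 1$, and your normalized limit is the right one. The stated conclusion is consistent only with the $\dmu$-weighted normalization $\sum_s \dmu(s) \tilde c(s)$, since $\sum_s \dmu(s) \ratio(s) = \sum_s \dpi(s) = 1$; that weighted reading is what the paper's own proof implicitly uses when it sets $C = \sum_s \dmu(s) c^0(s)$ and asserts ``the normalization term ensures $\alpha = 1/C$,'' and it matches both the $\dmu$-weighted simplex $\Delta_{\Phi,\dmu}$ used for the later projection step and the soft normalization loss of Section ``Soft Ratio Normalization.'' So: same route, with genuinely more rigor on well-definedness and uniqueness; under the weighted reading your scalar $e^\top \ratio$ is replaced by $\dmu^\top \ratio = 1$ and your conclusion coincides with the paper's.
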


\subsection{COP-TD with Linear Function Approximation}

The covariate shift method is called-for when the value function is approximated. Under these circumstances, one might expect that we also need to learn an approximate ratio $\hat c$. \citeauthor{hallak17consistent} consider the linear approximation
\begin{equation*}
\hat c (s) = \phi(s)^\top w,
\end{equation*}
where $w \in \bR^k$.\footnote{In practice, we may avoid negative $\hat c$'s by clipping them at 0.} This gives rise to a semi-gradient update rule similar to \eqnref{semigradient} but implementing \eqnref{cop_update_rule}:
\begin{equation*}
\tilde w \gets w + \alpha \left [ \frac{\pi(a \cbar s)}{\mu(a \cbar s)} \phi(s)^\top w - \phi(s')^\top w \right ] \phi(s')
\end{equation*}
and also followed by a projection step on the $\dmu$-weighted simplex $\Delta_{\Phi, \dmu}$ defined by the set $W_{\Phi, \dmu} := \{ u \in \bR^k : \sum_{s \in \cS} \dmu(s) \phi(s)^\top u = 1,  \phi(s)^\top u \geq 0 \}$:
\begin{equation*}
w \gets \argmin_{u \in W_{\Phi, \dmu}} \norm{u - \tilde w} .
\end{equation*}
The projection step ensures that the approximate ratio $\hat c$ corresponds to some distribution ration $\frac{d}{\dmu}$ for $d \in \cS$. The combined process is summarized by the normalized COP operator: ${\hat c}^{k+1} := \Proj_{\Delta_{\Phi, \dmu}} \Proj_{d} Y {\hat c}^k$, whose repeated application converges to some approximate ratio.

One interesting fact is that the semi-gradient update rule, which corresponds to a $d$-weighted projection, is by itself insufficient to guarantee the good behavior of the algorithm.
\begin{restatable}{lem}{fixedPointOfApproximateCop}
\label{lem:fixed_point_of_approximate_cop}
Let $Y$ be a symmetric COP-TD operator and $\Proj$ be the projection onto $\Phi$ in $L_2$ norm. If $\ratio$ is not in the span of $\Phi$, then $c = 0$ is the only solution to
\begin{equation*}
\Proj Y c = c .
\end{equation*}
\end{restatable}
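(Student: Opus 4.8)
The plan is to combine two observations: that any fixed point of $\Proj Y$ necessarily lies in the range of the projection, hence in the span of $\Phi$; and that the symmetry of $Y$ forces any vector whose norm is preserved by $Y$ to be a scalar multiple of the top eigenvector $\ratio$. First I would note that $\Proj Y c = c$ implies $c = \Proj (Y c) \in \mathrm{range}(\Proj) = \mathrm{span}(\Phi)$, so it suffices to show that $0$ is the only fixed point lying in $\mathrm{span}(\Phi)$ whenever $\ratio \notin \mathrm{span}(\Phi)$.

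Next I would pin down the spectrum of $Y$. Since $Y = \Ddmu^{-1} \Ppi^\top \Ddmu$ is similar, through the diagonal matrix $\Ddmu$, to $\Ppi^\top$, it shares the eigenvalues of the stochastic matrix $\Ppi$; in particular $\rho(Y) = 1$, the eigenvalue $1$ is simple with eigenvector $\ratio$ (the fixed point identified after \eqnref{cop_operator}), and ergodicity of $\Ppi$ guarantees every other eigenvalue has modulus strictly less than $1$. Because $Y$ is symmetric by hypothesis, these eigenvalues are real and $Y$ admits an orthonormal eigenbasis; consequently its Euclidean operator norm equals its spectral radius, $\norm{Y}_2 = 1$, and expanding $c$ in the eigenbasis shows that $\norm{Yc}_2 = \norm{c}_2$ holds if and only if $c$ lies in the unit-modulus eigenspace, i.e. $c = \beta \ratio$ for some $\beta \in \bR$.

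Finally I would run a non-expansiveness sandwich. A fixed point satisfies $\norm{c}_2 = \norm{\Proj Y c}_2 \le \norm{Yc}_2 \le \norm{Y}_2 \norm{c}_2 = \norm{c}_2$, where the first inequality uses that the orthogonal projection $\Proj$ is non-expansive in $L_2$. Hence every inequality is an equality; the equality $\norm{Yc}_2 = \norm{c}_2$ then yields $c = \beta \ratio$, and combining this with $c \in \mathrm{span}(\Phi)$ gives $\beta \ratio \in \mathrm{span}(\Phi)$. Since $\ratio \notin \mathrm{span}(\Phi)$ by assumption, we must have $\beta = 0$, and therefore $c = 0$.

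The main obstacle is the spectral step: converting ``symmetric COP operator'' into the clean statement that $\norm{Y}_2 = 1$ with a one-dimensional norm-preserving subspace spanned by $\ratio$. This rests on transferring the Perron--Frobenius picture from $\Ppi$ to $Y$ via the similarity transform, and on using ergodicity (specifically aperiodicity) to exclude an eigenvalue of $-1$, which for a symmetric matrix would otherwise produce a second unit-modulus eigendirection and break the ``$\norm{Yc}_2 = \norm{c}_2 \Rightarrow c \propto \ratio$'' implication. Everything after the spectral characterization is the routine non-expansiveness argument.
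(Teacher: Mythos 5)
Your proof is correct and takes essentially the same route as the paper's: both arguments rest on the spectral characterization of the symmetric operator $Y$ (via similarity to $\Ppi^\top$, giving operator norm $1$ with $\ratio$ spanning the only norm-preserving direction) combined with the non-expansiveness of the orthogonal projection and the hypothesis $\ratio \notin \mathrm{span}(\Phi)$. If anything, your single equality-sandwich argument is tidier than the paper's two-case split, and your explicit appeal to ergodicity to exclude a $-1$ eigenvalue (which for a symmetric $Y$ would create a second norm-preserving direction) makes rigorous a point the paper's proof glosses over.
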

Lemma \ref{lem:fixed_point_of_approximate_cop} argues that the normalization step is not only a convenience but is in fact necessary for the process to converge to anything meaningful. This is further validated by numerical experiments with general $\Ppi$, $\dmu$ and $d$ where we obseve that the repeated application of operator $\Proj_d Y$ either converges to 0 or diverges.

\section{A Practical COP-TD}

In this paper we are concerned with the application of COP-TD to practical scenarios, where approximating $\ratio$ is a must. As the following observations suggest, however, there are a number of limitations to COP-TD.

\noindent \textbf{Lack of contraction factor.} The operator $Y$ is not in general a contraction mapping. Hence, while the process $c^{k+1} \coloneqq Y c^k$ converges, it may do so at a slow rate, with greater variations in the sample-based case, and more importantly may be unstable when combined with function approximation.

\noindent \textbf{Hard-to-satisfy projection step.} In the approximate case, we saw that it is necessary to combine the COP operator to a projection onto the $\dmu$-weighted simplex. Although it is possible to approximate this projection step in an online, sample-based manner for linear function approximation  (\citeauthor{hallak17consistent} recommend constraining the weights to the simplex generated by a sufficiently large enough sample), no counterpart exists for more general classes of function approximations, making COP-TD hard to combine with neural networks.


In what follows we address these two issues in turn.

\subsection{The Discounted COP Learning Rule}

While repeated applications of the operator converge to $\ratio$, the operator is not in general a
contraction mapping, and its convergence profile is tied to the (usually unknown) mixing time of the Markov chain described by $\Ppi$. Our main contribution is the \emph{$\hgamma$-discounted COP-TD} learning rule, which recovers COP-TD for $\hgamma = 1$.

\begin{defn}
Let $c \in \bR^n$. For a step-size $\alpha > 0$, discount factor $\hgamma \in [0, 1]$, and sample $(s, a, s')$ drawn respectively from $\dmu, \mu$, and $P$, the $\hgamma$-discounted COP-TD learning rule is
\begin{equation}\label{eqn:discounted_cop}
c(s') \gets c(s') + \alpha \left [ \hgamma \frac{\pi(a \cbar s)}{\mu(a \cbar s)} c(s) + (1 - \hgamma) - c(s') \right ] .
\end{equation}
The corresponding operator is
\begin{equation*}
\Ygamma c := \hgamma Y c + (1 - \hgamma) e .
\end{equation*}
\end{defn}
By inspection, it is clear that $Y_1 = Y$. However, as we will see, the discounted COP-TD learning rule has several desirable properties compared to its undiscounted counterpart. We begin by characterizing the discounted COP operator.
\begin{defn}
For a given $\hgamma \in [0, 1]$, we define the \emph{discounted reset transition function} $\hPpi$ as:
\begin{equation*}
\hPpi := \hgamma \Ppi + (1 - \hgamma) e d_\mu^\top,
\end{equation*}
where $e d_\mu^\top$ is the matrix whose columns are all $\dmu$.
\end{defn}
The discounted reset transition function can be understood as a process which either transitions as
usual with probability $\hgamma$, or resets to the stationary distribution $\dmu$ with the remainder
probability. This is analogous to the perspective of the discount factor as a probability of terminating
\cite{white17unifying}, and is related to the constraint that arises in the dual formulation of the value function \cite{wang08dual}.

We denote by $\hdpi$ the stationary distribution satisfying $\hdpi = \hdpi \hPpi$. As an aside, the inclusion of the reset guarantees the ergodicity of the Markov chain defined by $\hPpi$.

\begin{restatable}{prop}{hdpStationaryDistribution}\label{prop:hdp_stationary_distribution}
The stationary distribution $\hdpi$ is given by
\begin{equation*}
\hdpi = (1 - \hgamma) (I - \hgamma \Ppi^\top)^{-1} \dmu,
\end{equation*}
where the sum
\begin{equation*}
(I - \hgamma \Ppi^\top)^{-1} := \sum_{t=0}^\infty (\hgamma \Ppi^\top)^t .
\end{equation*}
is convergent for $\hgamma < 1$.
\end{restatable}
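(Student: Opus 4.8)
The plan is to verify directly that the proposed closed form for $\hdpi$ satisfies the stationarity equation $\hdpi = \hdpi \hPpi$, and then to confirm that the infinite sum defining $(I - \hgamma \Ppi^\top)^{-1}$ converges for $\hgamma < 1$, which simultaneously justifies that the inverse matrix exists.

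First I would establish convergence of the Neumann series. Since $\Ppi$ is a stochastic matrix, $\Ppi^\top$ has spectral radius $1$, so $\hgamma \Ppi^\top$ has spectral radius $\hgamma < 1$ whenever $\hgamma < 1$. The standard Neumann-series argument then gives that $\sum_{t=0}^\infty (\hgamma \Ppi^\top)^t$ converges and equals $(I - \hgamma \Ppi^\top)^{-1}$; in particular $I - \hgamma \Ppi^\top$ is invertible. A clean way to get the spectral-radius bound without invoking diagonalizability is to note that $\Ppi$ is row-stochastic so $\infnorm{\Ppi} = 1$, hence $\infnorm{(\hgamma\Ppi)^t} \le \hgamma^t$, and transposing controls $\Ppi^\top$ in the dual norm; this sidesteps exactly the full-eigenvector assumption that the authors are trying to avoid elsewhere.

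Next I would substitute the candidate $\hdpi = (1 - \hgamma)(I - \hgamma \Ppi^\top)^{-1}\dmu$ into $\hdpi = \hdpi \hPpi$. Writing the stationarity condition in column form as $\hPpi^\top \hdpi^\top = \hdpi^\top$ (transposing $\hdpi = \hdpi\hPpi$), and using $\hPpi^\top = \hgamma \Ppi^\top + (1-\hgamma)\dmu e^\top$, the computation reduces to showing
\begin{equation*}
\big(\hgamma \Ppi^\top + (1-\hgamma)\dmu e^\top\big)\hdpi = \hdpi.
\end{equation*}
Rearranging, this is $(I - \hgamma \Ppi^\top)\hdpi = (1-\hgamma)\dmu\,(e^\top \hdpi)$, so after multiplying through by $(I - \hgamma \Ppi^\top)^{-1}$ the claim holds precisely when $e^\top \hdpi = 1$, i.e. when the candidate is a genuine probability distribution. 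Thus the one nontrivial algebraic fact to nail down is the normalization $e^\top \hdpi = 1$.

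The main obstacle is therefore checking that normalization. I would verify it by computing $e^\top (I - \hgamma\Ppi^\top)^{-1}\dmu$ through the series: $e^\top \sum_{t\ge 0}(\hgamma\Ppi^\top)^t \dmu = \sum_{t\ge 0}\hgamma^t\, e^\top (\Ppi^\top)^t \dmu$. Because $\Ppi$ is stochastic, $e^\top \Ppi^\top = (\Ppi e)^\top = e^\top$, so $e^\top (\Ppi^\top)^t = e^\top$ for every $t$, and $e^\top \dmu = 1$ since $\dmu \in \Delta(\cS)$. Hence the sum telescopes to $\sum_{t\ge 0}\hgamma^t = (1-\hgamma)^{-1}$, giving $e^\top \hdpi = (1-\hgamma)\cdot(1-\hgamma)^{-1} = 1$ as required. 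Together with nonnegativity of $\hdpi$ (immediate from the series, since every term is a nonnegative matrix applied to the nonnegative vector $\dmu$), this shows $\hdpi \in \Delta(\cS)$ and closes the argument; uniqueness then follows from the ergodicity of $\hPpi$ noted just above the statement.
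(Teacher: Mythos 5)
Your proof is correct and takes essentially the same route as the paper's: both arguments verify the normalization $e^\top \hdpi = 1$ via the geometric series and the stochasticity identity $e^\top \Ppi^\top = e^\top$, and then check the fixed-point equation $\hPpi^\top \hdpi = \hdpi$ by direct algebra, with your rearrangement isolating the normalization as the crux being only a cosmetic reordering of the paper's computation. Your additions (explicit Neumann-series convergence, nonnegativity, and uniqueness via ergodicity) merely fill in details the paper states without proof.
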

Put another way, $\hdpi$ describes an exponentially weighted sum of $k$-step deviations from the behavior policy's stationary distribution $\dmu$, where the $k^{th}$ deviation corresponds to applying transition $\Ppi^\top$ $k$ times.

\begin{restatable}{lem}{YgammaFixedPoint}\label{lem:Ygamma_fixed_point}
For $\hgamma < 1$, the ratio $\hratio$ is the unique fixed point of the operator $\Ygamma$.
\end{restatable}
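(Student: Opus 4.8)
The plan is to treat $\Ygamma$ as an affine operator and reduce its fixed-point equation to a linear system. Since $\Ygamma c = \hgamma Y c + (1-\hgamma) e$, a vector $c$ satisfies $\Ygamma c = c$ if and only if $(I - \hgamma Y) c = (1-\hgamma) e$. If I can show that $I - \hgamma Y$ is invertible, then both claims follow at once: the system has a unique solution (existence and uniqueness of the fixed point), because any two fixed points differ by an element of $\ker(I - \hgamma Y) = \{0\}$. So the whole lemma hinges on this single invertibility fact, and the remaining work is to identify the solution as $\hratio$.

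To establish invertibility I would argue spectrally rather than through a contraction norm. Recall $Y = \Ddmu^{-1} \Ppi^\top \Ddmu$, which is similar to $\Ppi^\top$ via the conjugation by $\Ddmu$ (invertible because $\dmu$ has full support under the unichain assumption). Hence $Y$, $\Ppi^\top$, and $\Ppi$ all share the same spectrum. Because $\Ppi$ is row-stochastic we have $\Ppi e = e$ and every eigenvalue lies in the closed unit disk, so the spectral radius of $Y$ equals $1$. Consequently the spectral radius of $\hgamma Y$ is $\hgamma < 1$, the value $1$ is not an eigenvalue of $\hgamma Y$, and $I - \hgamma Y$ is invertible; equivalently, the Neumann series $\sum_{t=0}^\infty (\hgamma Y)^t$ converges to $(I - \hgamma Y)^{-1}$. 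It remains to check that the resulting fixed point is exactly $\hratio$. I would verify this directly: writing $\hratio = \Ddmu^{-1} \hdpi$ gives $\Ddmu \hratio = \hdpi$ and $\Ddmu e = \dmu$, so $\Ddmu \Ygamma \hratio = \hgamma \Ppi^\top \hdpi + (1-\hgamma)\dmu$. The stationarity condition $\hdpi = \hdpi \hPpi$, transposed and combined with $e^\top \hdpi = 1$ and the definition of $\hPpi$, yields precisely $\hgamma \Ppi^\top \hdpi + (1-\hgamma)\dmu = \hdpi$, so $\Ygamma \hratio = \hratio$. (Alternatively one can push the similarity transform through the Neumann series to write the unique solution as $(1-\hgamma)\Ddmu^{-1}(I - \hgamma \Ppi^\top)^{-1}\dmu$ and invoke Proposition \ref{prop:hdp_stationary_distribution}.)

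The main obstacle is the invertibility step, and the subtlety worth flagging is that $Y$ is not known to be a contraction in any convenient norm --- this is exactly the defect of the undiscounted operator discussed earlier --- so I deliberately avoid a Banach-type norm argument. Instead the proof leans on the eigenvalue bound inherited from similarity to the stochastic matrix $\Ppi$, and it is the discount factor $\hgamma < 1$ that strictly shrinks the spectral radius of $\hgamma Y$ strictly below $1$, simultaneously rescuing existence and uniqueness where the $\hgamma = 1$ case cannot.
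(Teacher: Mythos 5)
Your proof is correct, but it establishes uniqueness by a genuinely different mechanism than the paper. The paper proves existence exactly as you do --- the direct algebraic check that $\Ddmu \Ygamma \hratio = \hgamma \Ppi^\top \hdpi + (1-\hgamma)\dmu = \hPpi^\top \hdpi = \hdpi$ --- but for uniqueness it forward-references Theorem \ref{thm:n_step_discounted_contraction}: since $K_{\pi,\mu,n} \le K_{\pi,\mu}$ is bounded uniformly in $n$, the factor $\hgamma^n \sqrt{K_{\pi,\mu,n}}$ eventually drops below $1$, so some power $\Ygamma^n$ is a contraction in $\norm{\cdot}_{\dmu}$ and Banach's fixed point theorem applies. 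You instead reduce $\Ygamma c = c$ to the linear system $(I - \hgamma Y)c = (1-\hgamma)e$ and get invertibility from the spectral radius: $Y = \Ddmu^{-1}\Ppi^\top \Ddmu$ is similar to $\Ppi^\top$, so $\rho(\hgamma Y) = \hgamma < 1$. Your route is more self-contained --- it needs no norm, no concentration coefficient $K_{\pi,\mu,n}$, and no forward dependency on a later theorem --- and it delivers existence and uniqueness simultaneously, with the Neumann series $(1-\hgamma)\sum_t (\hgamma Y)^t e$ recovering the closed form of Proposition \ref{prop:hdp_stationary_distribution} essentially for free (this is also how the paper proves Theorem \ref{thm:discounted_cop_convergence}). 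What the paper's route buys in exchange is quantitative information: the contraction modulus $\hgamma^n\sqrt{K_{\pi,\mu,n}}$ gives a convergence rate and is the ingredient actually needed for the function-approximation result ($\Proj_{\dmu}\Ygamma$ being a contraction), which a bare spectral-radius argument does not provide, since spectral radius strictly less than one controls asymptotics but not one-step behavior under projection. One small imprecision to flag: full support of $\dmu$ (needed for $\Ddmu^{-1}$) does not follow from a unichain assumption alone, under which transient states get zero mass; it requires ergodicity of $P_\mu$. This is a standing assumption the paper itself makes implicitly whenever it writes $\Ddmu^{-1}$, so it costs you nothing, but it should be attributed correctly.
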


\begin{restatable}{thm}{discountedCopConvergence}\label{thm:discounted_cop_convergence}
Let $c^0 \in \Delta$. For $\hgamma < 1$ the process $c^{k+1} \coloneqq \Ygamma c^k$ converges to $\hratio$, where $\hdpi$ is the stationary distribution of the transition function $\hPpi$ corresponding to the given $\hgamma$.
\end{restatable}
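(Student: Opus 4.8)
The plan is to read $\Ygamma c = \hgamma Y c + (1-\hgamma) e$ as an affine iteration $c^{k+1} = A c^k + b$ with linear part $A = \hgamma Y$ and offset $b = (1-\hgamma)e$, and to reduce everything to a spectral bound on $A$. By Lemma~\ref{lem:Ygamma_fixed_point}, for $\hgamma < 1$ the operator $\Ygamma$ has the unique fixed point $\hratio$. Subtracting the fixed-point identity $\hratio = \hgamma Y \hratio + (1-\hgamma)e$ from the update, the offset cancels and the error $u^k := c^k - \hratio$ obeys the homogeneous recursion $u^{k+1} = \hgamma Y u^k$, so $u^k = (\hgamma Y)^k u^0$. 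Convergence $c^k \to \hratio$ for every starting point (in particular for $c^0 \in \Delta$) is therefore equivalent to $(\hgamma Y)^k \to 0$, which by the standard characterization of matrix powers holds exactly when the spectral radius satisfies $\rho(\hgamma Y) < 1$.

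The crux is thus the identity $\rho(\hgamma Y) = \hgamma\,\rho(Y) = \hgamma$. From the vector form \eqnref{cop_operator}, $Y = \Ddmu^{-1} \Ppi^\top \Ddmu$ is a similarity transform of $\Ppi^\top$ by the diagonal matrix $\Ddmu$, which is invertible since $\dmu > 0$; similarity preserves the spectrum, so $Y$, $\Ppi^\top$, and $\Ppi$ all have the same eigenvalues and $\rho(Y) = \rho(\Ppi)$. Now $\Ppi$ is row-stochastic, so $\Ppi e = e$ exhibits $1$ as an eigenvalue, while Gershgorin's theorem places every eigenvalue in a disk centered at $\Ppi(s \cbar s)$ of radius $1 - \Ppi(s \cbar s)$, each contained in the closed unit disk; hence every eigenvalue has modulus at most $1$ and $\rho(\Ppi) = 1$. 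Since $\hgamma \ge 0$ this gives $\rho(\hgamma Y) = \hgamma < 1$, and therefore $(\hgamma Y)^k \to 0$, completing the argument. Notably this step uses only the stochasticity of $\Ppi$ and positivity of $\dmu$, not ergodicity of $\Ppi$ itself.

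To independently confirm that the limit is $\hratio$ (rather than relying solely on Lemma~\ref{lem:Ygamma_fixed_point}), I would unroll the iteration as $c^k = (\hgamma Y)^k c^0 + (1-\hgamma)\sum_{t=0}^{k-1} (\hgamma Y)^t e$; as $k \to \infty$ the first term vanishes and the Neumann series converges to $(1-\hgamma)(I - \hgamma Y)^{-1} e$. Using $I - \hgamma Y = \Ddmu^{-1}(I - \hgamma \Ppi^\top)\Ddmu$ together with $\Ddmu e = \dmu$, this equals $\Ddmu^{-1} (1-\hgamma)(I - \hgamma \Ppi^\top)^{-1} \dmu = \Ddmu^{-1} \hdpi = \hratio$ by Proposition~\ref{prop:hdp_stationary_distribution}. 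The main obstacle is really just pinning down $\rho(Y) = 1$ cleanly: one must justify the diagonal similarity (hence the need for $\dmu > 0$) and combine the two facts that a stochastic matrix has an eigenvalue equal to $1$ and has all eigenvalues bounded by $1$ in modulus, so that the factor $\hgamma$ pushes the entire spectrum strictly inside the unit disk. Everything else is the routine algebra of affine fixed-point iterations.
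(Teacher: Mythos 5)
Your proof is correct, and its engine differs from the paper's. The paper unrolls the recursion directly: $\Ygamma^k c^0 = (1-\hgamma)\sum_{i=0}^{k-1}(\hgamma Y)^i e + \hgamma^k Y^k c^0$, evaluates the geometric series using $Y e = \Ddmu^{-1}\Ppi^\top \dmu$ and Proposition \ref{prop:hdp_stationary_distribution}, and drops the residual $\hgamma^k Y^k c^0$ in the limit --- a step that implicitly needs $Y^k c^0$ to stay bounded, which is exactly where the hypothesis $c^0 \in \Delta$ and the ergodicity underlying Theorem \ref{thm:cop_operator_convergence} enter. You instead subtract the fixed-point identity (using only the directly computed half of Lemma \ref{lem:Ygamma_fixed_point}, so there is no circularity with the later contraction results) and reduce the whole theorem to $\rho(\hgamma Y) = \hgamma < 1$, obtained from the diagonal similarity $Y = \Ddmu^{-1}\Ppi^\top\Ddmu$ together with the fact that a stochastic matrix has spectral radius $1$; your Neumann-series identification of the limit then essentially reproduces the paper's series computation. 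Your route buys more: convergence from \emph{arbitrary} $c^0 \in \bR^n$, with no ergodicity assumption on $\Ppi$ (only $\dmu > 0$, which is needed anyway for $Y$ to be defined). This is in fact what substantiates the paper's remark, immediately after the theorem, that the discounted operator requires neither normalization nor positive initial values --- a claim the paper's own proof, restricted as it is to $c^0 \in \Delta$, does not quite establish. What the paper's route buys is a shorter, self-contained computation that exhibits the limit $\hratio$ explicitly without invoking the spectral-radius characterization of convergent matrix powers; the only facts you should make explicit in a final write-up are the invertibility of $\Ddmu$ and the equivalence $A^k \to 0 \iff \rho(A) < 1$.
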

One of the most appealing properties of the discounted operator (for $\hgamma < 1$) is that it neither requires normalization, or even positive initial values to guarantee convergence. As we shall see, this
greatly simplifies the learning process.

\subsection{Discounted COP with Linear Function Approximation}

The appeal of the COP-TD learning rule is that it can be applied online. The same remains true for
our discounted COP learning rule \eqnref{discounted_cop}. Naturally, when combined with function
approximation the same issue of norm arises: can our learning process itself be guaranteed to converge? The answer is yes, provided the discount factor is taken to be small enough.

To begin, let us assume sample transitions are drawn as $s \sim \dmu, a \sim \mu(\cdot \cbar s), s' \sim P(\cdot \cbar s, a)$, as before. Because $\dmu$ is the stationary distribution, $s' \sim \dmu$ also. The process we study is therefore described by the projected discounted COP operator $\Proj_{\dmu} \Ygamma$.

\begin{restatable}{lem}{concentrationCoefficient}\label{lem:concentration_coefficient}
The induced operator norm of the COP operator $Y^n$ is upper bounded by a constant $\sqrt{K_{\pi, \mu, n}}$, in the sense that
\begin{equation*}
\norm{Y^n}^2_{\dmu} \le K_{\pi, \mu, n} := \sup_{s' \in \cS} \sum_{s \in \cS} \frac{\dmu(s)}{\dmu(s')} \Ppi^n(s' \cbar s) .
\end{equation*}
Further, the series can be bounded by a constant,
\begin{equation*}
K_{\pi, \mu, n} \leq K_{\pi, \mu} := \norm{ \frac{\dmu(s)}{\dpi(s)} }_\infty \norm{ \frac{\dpi(s)}{\dmu(s)} }_\infty.
\end{equation*}
\end{restatable}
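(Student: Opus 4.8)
The plan is to bound the $\dmu$-weighted operator norm entrywise, using a single Cauchy--Schwarz step together with an exchange of summation order, and then to treat the inequality $K_{\pi,\mu,n}\le K_{\pi,\mu}$ as a separate estimate that brings in the stationary distribution $\dpi$ of $\Ppi$. First I would record the explicit form of the iterated operator. Since $Yc=\Ddmu^{-1}\Ppi^\top\Ddmu c$, we have $Y^n=\Ddmu^{-1}(\Ppi^n)^\top\Ddmu$, so
\[
(Y^n c)(s') = \frac{1}{\dmu(s')}\sum_{s\in\cS}\Ppi^n(s'\cbar s)\,\dmu(s)\,c(s).
\]
Writing $Q(s',s):=\frac{\dmu(s)}{\dmu(s')}\Ppi^n(s'\cbar s)\ge 0$ and $\kappa(s'):=\sum_{s}Q(s',s)$, the target quantity is $\norm{Y^n c}^2_\dmu=\sum_{s'}\dmu(s')\big(\sum_s Q(s',s)c(s)\big)^2$, and by definition $K_{\pi,\mu,n}=\sup_{s'}\kappa(s')$.

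The main step is to control the inner square. Viewing $Q(s',\cdot)$ as a nonnegative weight of total mass $\kappa(s')$, Cauchy--Schwarz (equivalently weighted Jensen) gives $\big(\sum_s Q(s',s)c(s)\big)^2\le\kappa(s')\sum_s Q(s',s)c(s)^2\le K_{\pi,\mu,n}\sum_s Q(s',s)c(s)^2$. Substituting and exchanging the order of summation, the coefficient multiplying each $c(s)^2$ is $K_{\pi,\mu,n}\sum_{s'}\dmu(s')Q(s',s)=K_{\pi,\mu,n}\,\dmu(s)\sum_{s'}\Ppi^n(s'\cbar s)$. The step I expect to be the crux is recognizing that $\Ppi^n$ is row-stochastic, so $\sum_{s'}\Ppi^n(s'\cbar s)=1$ and this coefficient collapses to $K_{\pi,\mu,n}\,\dmu(s)$; that collapse is precisely what converts the asymmetric column sum back into the clean $\dmu$ weight. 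We then obtain $\norm{Y^n c}^2_\dmu\le K_{\pi,\mu,n}\norm{c}^2_\dmu$ for every $c\ne 0$, whence the induced operator norm satisfies $\norm{Y^n}^2_\dmu\le K_{\pi,\mu,n}$, the first claim.

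For the second inequality I would use that $\dpi=\dpi\Ppi$, hence $\dpi(s')=\sum_s\dpi(s)\Ppi^n(s'\cbar s)$ for every $n$. Inserting $\dpi(s)/\dpi(s)$ into $\kappa(s')$ and extracting the worst-case ratio yields
\[
\kappa(s')=\frac{1}{\dmu(s')}\sum_s\frac{\dmu(s)}{\dpi(s)}\,\dpi(s)\,\Ppi^n(s'\cbar s)\le\infnorm{\frac{\dmu}{\dpi}}\frac{\dpi(s')}{\dmu(s')}\le\infnorm{\frac{\dmu}{\dpi}}\infnorm{\frac{\dpi}{\dmu}},
\]
and taking the supremum over $s'$ gives $K_{\pi,\mu,n}\le K_{\pi,\mu}$ uniformly in $n$. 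The only subtlety worth flagging is that $\dpi$ must be the stationary distribution of $\Ppi$ rather than of $P_\mu$, so that the $n$-step invariance I rely on holds; the ergodicity of $\Ppi$ assumed throughout guarantees that $\dpi$ exists and is strictly positive, so all the ratios appearing above are finite.
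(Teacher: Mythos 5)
Your proof is correct and takes essentially the same route as the paper's: the same weighted Jensen/Cauchy--Schwarz step with the normalization term $\kappa(s')$ (the paper's $z(s')$), the same exchange of summation order using row-stochasticity of $\Ppi^n$, and the same insertion of $\dpi(s)/\dpi(s)$ together with stationarity of $\dpi$ under $\Ppi$ to obtain $K_{\pi,\mu,n} \le K_{\pi,\mu}$. The only differences are cosmetic: you write $Y^n$ explicitly throughout (the paper's proof has a typo, writing $\norm{Yx}_\dmu$ while using $\Ppi^n$ inside), and you helpfully flag that ergodicity guarantees $\dpi > 0$ so the ratios are finite.
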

The term $K_{\pi, \mu, n}$ is a concentration coefficient similar to those studied by \citet{munos03error}. Intuitively, it measures the discrepancy in stationary distributions between two states that are ``close'' according to $\pi$, in the sense that $s'$ is reachable from $s$ in $n$ steps. When $\mu = \pi$, the sum simplifies to $\dpi(s')$ and this term is 1. 

We can make use of the concentration coefficient to provide a safe value of $\hgamma$ below which the discounted COP learning rule is convergent. Although most of our work concerns 1-step updates, we provide a slightly more general result on $n$-step methods here, based on known contraction results \citep{sutton18reinforcement} and the existing multi-step extension of COP-TD \citep{hallak17consistent}.

\begin{restatable}{thm}{nStepDiscountedContraction}\label{thm:n_step_discounted_contraction}
Consider the $n$-step discounted COP operator $\Ygamma^n$. Then for any $c \in \bR^n$,
\begin{equation*}
\norm{\Y^n c - \hratio}_{\dmu} \le \hgamma^n \sqrt{K_{\pi, \mu, n}} \norm{c - \hratio}_{\dmu}
\end{equation*}
and in particular for $\hgamma < (K_{\pi, \mu, n})^{-1/2n}$, $\Ygamma^n$ is a contraction mapping. Since $K_{\pi, \mu, n}$ is a bounded series, the exponential factor is guaranteed to dominate. As a result, there exists a value of $\hgamma < 1$ for which the projected $n$-step discounted COP operator $\Proj_{\dmu} \Y^n$ is a contraction mapping.
\end{restatable}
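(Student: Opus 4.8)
The plan is to exploit the affine structure of $\Ygamma$ together with the fact that $\hratio$ is its fixed point (Lemma \ref{lem:Ygamma_fixed_point}), so that the $n$-step estimate collapses onto the operator-norm bound for $Y^n$ supplied by Lemma \ref{lem:concentration_coefficient}. First I would record the linearization identity: since $\Ygamma c = \hgamma Y c + (1-\hgamma) e$ is affine and $\Ygamma \hratio = \hratio$, subtracting the two relations yields $\Ygamma c - \hratio = \hgamma Y(c - \hratio)$. The crucial point is that the constant term $(1-\hgamma)e$ cancels precisely because $\hratio$ is a fixed point, so on the displacement $c - \hratio$ the operator acts as the pure linear map $\hgamma Y$.

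Next I would read $\Ygamma^n$ as the $n$-fold composition of $\Ygamma$ — the $n$-step operator equals the $n$-th power, exactly as the $n$-step Bellman operator equals $\cTpi^n$. A one-line induction on the identity above then gives $\Ygamma^n c - \hratio = \hgamma^n Y^n(c - \hratio)$. Taking the $\dmu$-weighted norm, passing to the induced operator norm of the single matrix $Y^n$, and invoking Lemma \ref{lem:concentration_coefficient} in the form $\norm{Y^n}_{\dmu} \le \sqrt{K_{\pi,\mu,n}}$ produces the claimed bound $\norm{\Y^n c - \hratio}_{\dmu} \le \hgamma^n \sqrt{K_{\pi,\mu,n}}\,\norm{c - \hratio}_{\dmu}$. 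The contraction claim for $\Ygamma^n$ is then immediate, since $\hgamma^n \sqrt{K_{\pi,\mu,n}} < 1$ precisely when $\hgamma < (K_{\pi,\mu,n})^{-1/2n}$.

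To handle the projected operator $\Proj_{\dmu} \Y^n$, I would use that $\Proj_{\dmu}$ is the orthogonal projection onto the span of $\Phi$ in the $\dmu$-inner product, hence a non-expansion in $\norm{\cdot}_{\dmu}$. Differencing two inputs gives $\Ygamma^n c - \Ygamma^n c' = \hgamma^n Y^n(c - c')$, and applying $\Proj_{\dmu}$ together with its non-expansiveness shows that $\Proj_{\dmu} \Ygamma^n$ contracts with the same factor $\hgamma^n \sqrt{K_{\pi,\mu,n}}$. For the existence of an admissible $\hgamma < 1$, I would appeal to the uniform bound $K_{\pi,\mu,n} \le K_{\pi,\mu}$: the threshold $(K_{\pi,\mu,n})^{-1/2n}$ is strictly positive, so $(0,\min\{1,(K_{\pi,\mu,n})^{-1/2n}\})$ is a nonempty set of valid discounts; equivalently, for any fixed $\hgamma < 1$ the factor $\hgamma^n \sqrt{K_{\pi,\mu,n}} \le \hgamma^n \sqrt{K_{\pi,\mu}} \to 0$, so the exponential decay dominates the bounded concentration factor.

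I expect the only genuine content to lie in the cancellation identity of the first step; once $\hratio$ is identified as the fixed point and the constant term is seen to drop, the remainder is a mechanical chaining of the induction, Lemma \ref{lem:concentration_coefficient}, and the non-expansiveness of the projection. The points worth stating carefully are that $\norm{\cdot}_{\dmu}$ is a genuine norm — which needs $\dmu$ to have full support, available on the recurrent class — so that $\Proj_{\dmu}$ is well defined and non-expansive, and that the $n$-step operator really is the composition rather than a separately defined multi-step return.
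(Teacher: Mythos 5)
Your proposal is correct and follows essentially the same route as the paper's proof: use the fixed-point property $\hratio = \Ygamma^n \hratio$ so the affine constant cancels, reduce to $\hgamma^n Y^n(c - \hratio)$, and apply the operator-norm bound $\norm{Y^n}_{\dmu} \le \sqrt{K_{\pi,\mu,n}}$ from Lemma \ref{lem:concentration_coefficient}. The only difference is that you spell out what the paper dismisses as ``the rest follows easily'' --- the non-expansiveness of $\Proj_{\dmu}$ in $\norm{\cdot}_{\dmu}$ and the use of the uniform bound $K_{\pi,\mu,n} \le K_{\pi,\mu}$ to guarantee an admissible $\hgamma < 1$ --- which is a faithful completion rather than a departure.
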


Theorem \ref{thm:n_step_discounted_contraction} shows that we can avoid the usual divergence issues with the learning rule \eqnref{discounted_cop} by taking a sufficiently small $\hgamma$. While these results are not altogether surprising (they mirror the case of value function approximation), we emphasize that there is no equivalent guarantee in the undiscounted case.

More generally, we are unlikely to be in the worst-case scenario achieving the concentration coefficient $K_{\pi, \mu, n}$ and, as our empirical evaluation will show,  divergence does not seem to be a problem even with large $\hgamma$. Yet, one may wonder whether it is relevant at all to learn an approximation to $\ratio$. Using Theorem \ref{thm:approximation_error} we argue that since the bound is continuous in the learning distribution $d$ we can expect improved performance even when the covariate shift is approximated for $\hgamma < 1$ or where a prediction error due to function approximation occurs.

Taken as a whole, our results suggest that incorporating the discount factor $\hgamma$ should improve the behavior of the COP-TD algorithm in practice.

\subsection{Soft Ratio Normalization}\label{sec:normalization}

Suppose we are given a function $c : \cS \to \bR$ differentiable w.r.t. its parameters for which we would like that
\begin{equation*}
\sum_{s \in \cS} \dmu(s) c(s) = 1 .
\end{equation*}
A common approach in deep reinforcement learning settings is to treat this as an additional loss to be minimized. In this section we also follow this approach, and consider minimizing the \emph{normalization loss}
\begin{equation}\label{eqn:loss_function_normalization}
\cL(c) := \tfrac{1}{2} \big (\sum_{s \in \cS} \dmu(s) c(s) - 1 \big )^2 .
\end{equation}
The gradient of this loss is
\begin{equation}\label{eqn:loss_gradient}
\grad \cL(c) = (\sum_{s \in \cS} \dmu(s) c(s) - 1) \sum_{s \in \cS} \dmu(s) \grad c(s) .
\end{equation}
We seek an unbiased estimate of this gradient. However, we cannot recover such an estimate with a single sample $s \sim \dmu$, in a classic case of the double-sampling problem \citep{baird95residual}. In particular, it is not hard to see that
\begin{equation*}
\expect_{s \sim \dmu} \left [ (c(s) - 1) \grad c(s) \right ] \ne \grad \cL(c) .
\end{equation*}
However, we can obtain such an estimate by considering $m \ge 2$ samples $s_1, \dots, s_m$ drawn from $\dmu$. The quantity
\begin{equation*}
\hat \grad(c) := \big(\tfrac{1}{m - 1} \sum_{i=2}^m c(s_i) - 1\big) \grad c(s_1)
\end{equation*}
is an unbiased estimate of the loss gradient \eqnref{loss_gradient}. In fact, as the following theorem states, we can do better by allowing each sample to play both roles in the estimate, and averaging the results.
\begin{restatable}{thm}{gradientEstimateNormalization}\label{thm:gradient_estimate_normalization}
Consider a differentiable function $c : \cS \to \bR$ and the loss function \eqnref{loss_function_normalization}. Given $s_1, \dots, s_m$ independent samples drawn from $\dmu$,
\begin{equation*}
\tfrac{1}{m} \sum_{i=1}^m \big ( \tfrac{1}{m - 1} \sum_{j \ne i} c(s_j) - 1\big ) \grad c(s_i)
\end{equation*}
is an unbiased estimate of $\grad \cL(c)$.
\end{restatable}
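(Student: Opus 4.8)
The plan is to reduce everything to two population quantities, $\bar c := \expect_{s \sim \dmu}[c(s)]$ and $\bar g := \expect_{s \sim \dmu}[\grad c(s)]$, in terms of which the target gradient \eqnref{loss_gradient} reads simply $\grad \cL(c) = (\bar c - 1)\,\bar g$. The goal is then to show the proposed estimator has expectation $(\bar c - 1)\bar g$, which I would do by a leave-one-out independence argument. First I would invoke linearity of expectation together with the fact that $s_1,\dots,s_m$ are i.i.d.: by exchangeability, each summand $\big(\tfrac{1}{m-1}\sum_{j \ne i} c(s_j) - 1\big)\grad c(s_i)$ of the outer average has the same expectation, so the expectation of the whole estimator equals that of a single representative term, say $i=1$:
\begin{equation*}
\expect\Big[\big(\tfrac{1}{m-1}\textstyle\sum_{j \ne 1} c(s_j) - 1\big)\,\grad c(s_1)\Big].
\end{equation*}

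The key step is to observe that the inner average $\tfrac{1}{m-1}\sum_{j \ne 1} c(s_j)$ depends only on $s_2,\dots,s_m$, while the factor $\grad c(s_1)$ depends only on $s_1$. Since the samples are independent, these two random quantities are independent, so the expectation of their product factors into the product of expectations. Evaluating each factor gives $\big(\tfrac{1}{m-1}(m-1)\bar c - 1\big)\bar g = (\bar c - 1)\bar g = \grad \cL(c)$, as desired.

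The main subtlety—and the very reason the estimator uses the leave-one-out structure $\sum_{j \ne i}$—is precisely this disjointness: excluding index $i$ from the inner sum guarantees that the averaged value term shares no sample with the gradient factor $\grad c(s_i)$, which is exactly what lets the product expectation factor. If any sample were reused across the two factors, a spurious $\expect_{s \sim \dmu}[c(s)\grad c(s)]$ cross term would appear in place of $\bar c\,\bar g$, reproducing the double-sampling bias flagged just before the statement. I would therefore state the independence explicitly for general $i$ (not merely $i=1$) to justify the symmetry reduction; once that is in place, the remaining computation is the routine factorization above.
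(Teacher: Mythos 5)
Your proof is correct and takes essentially the same approach as the paper's: linearity of expectation followed by the leave-one-out independence factorization, so that each term's expectation equals $\big(\expect_{s \sim \dmu}[c(s)] - 1\big)\,\expect_{s \sim \dmu}[\grad c(s)] = \grad \cL(c)$. Your reduction to a single representative term via exchangeability is only a cosmetic restructuring of the paper's term-by-term factorization, and your final evaluation is in fact stated more carefully than the paper's last line, which notationally conflates $\grad \tfrac{1}{2}\big(\expect[c(s)] - 1\big)^2$ with $\grad \expect\big[(c(s)-1)^2\big]$.
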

In our experimental section we will see that the normalization loss plays an important role in making COP-TD practical.

\begin{figure*}[tb!]
\begin{center}
\includegraphics[width=0.35\textwidth]{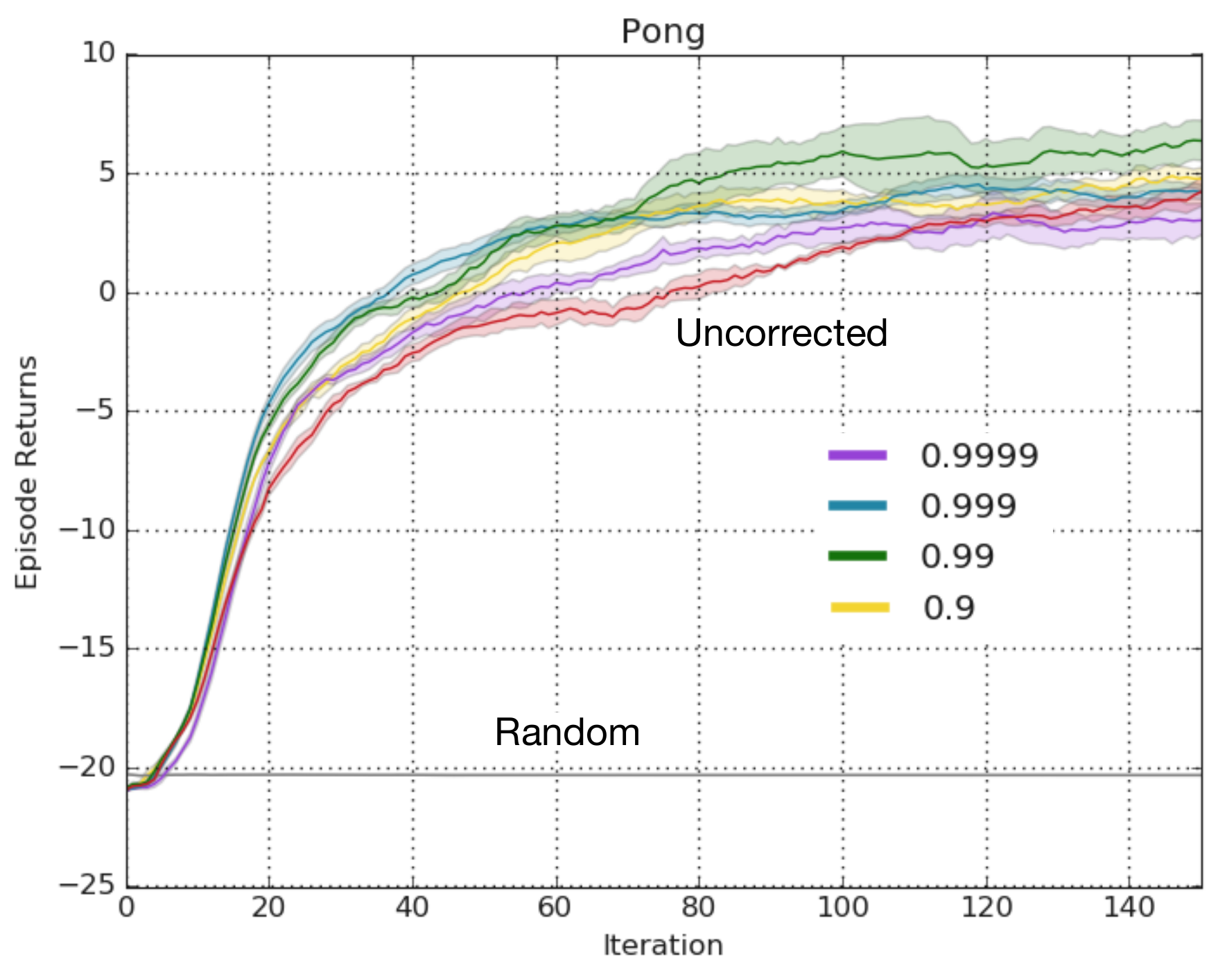}
\includegraphics[width=0.35\textwidth]{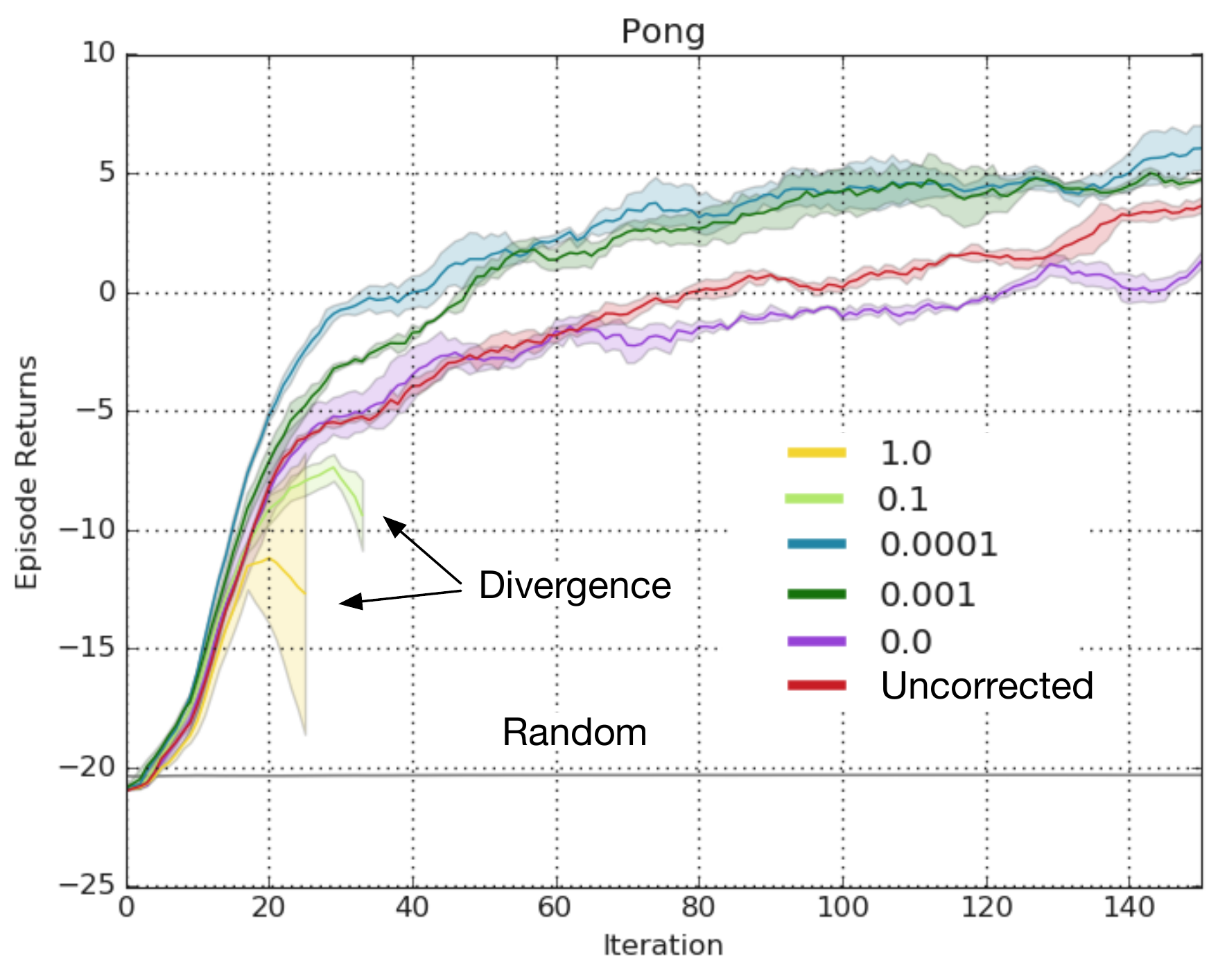}
\end{center}
\caption{ $\eta = 0.002$ with 5 seeds per run for 150 iterations. \textbf{Left.} Comparing discount factors in Pong. Using a discount factor gives a significant performance improvement. \textbf{Right.} Comparing normalization weights in Pong. Using normalization helps learning, but a large normalization weight causes divergence in the $c$ values.
\label{fig:sweep}}
\end{figure*}

\begin{figure*}[t]
\begin{center}
\includegraphics[width=.18\textwidth]{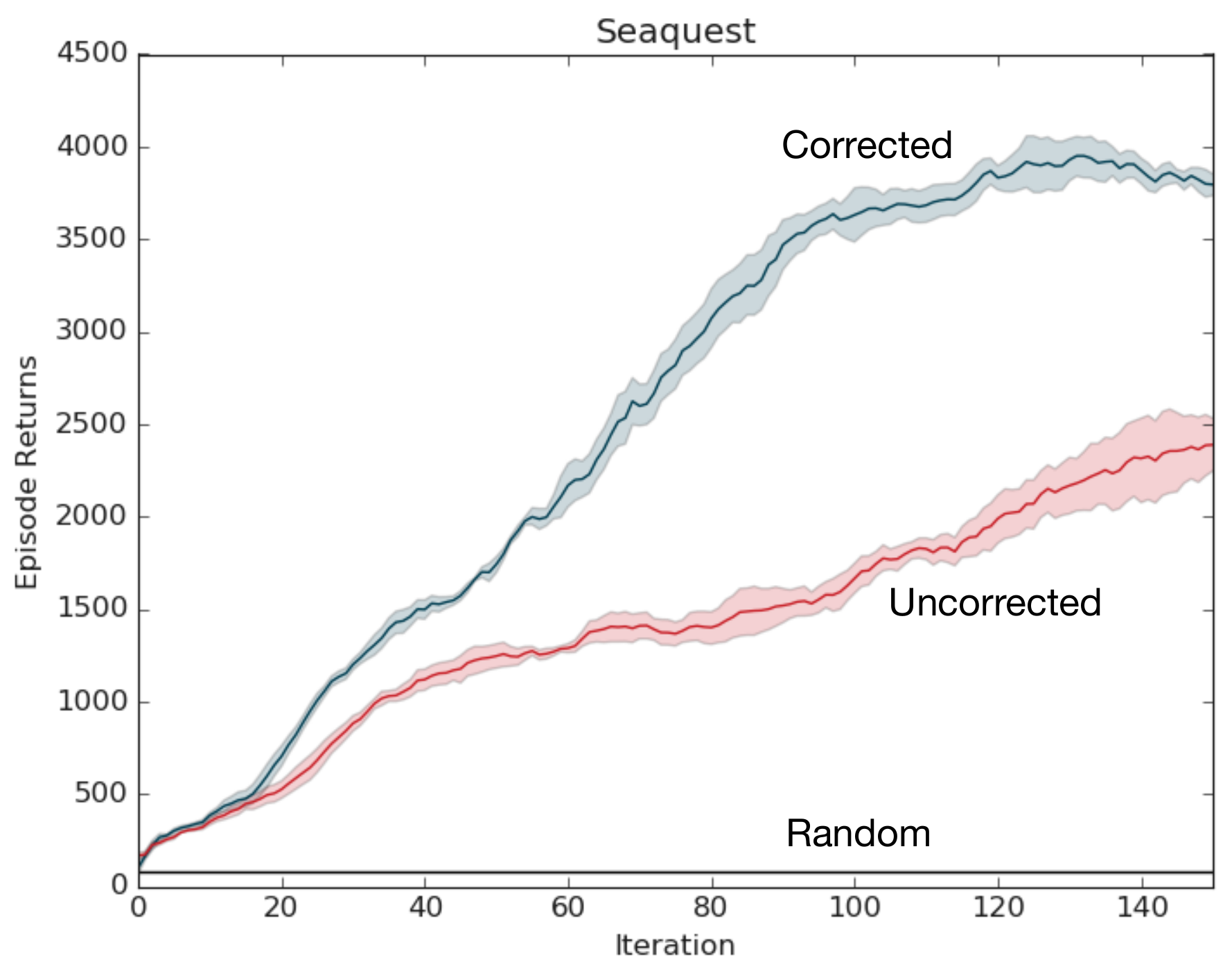}
\includegraphics[width=.18\textwidth]{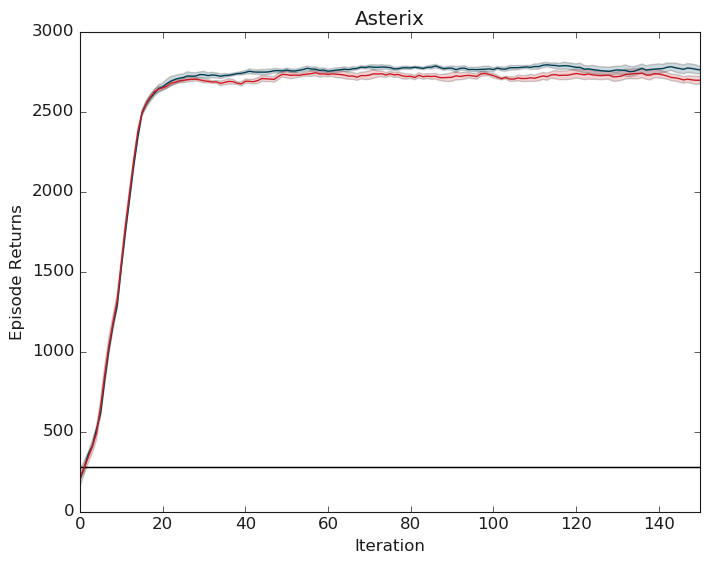}
\includegraphics[width=.18\textwidth]{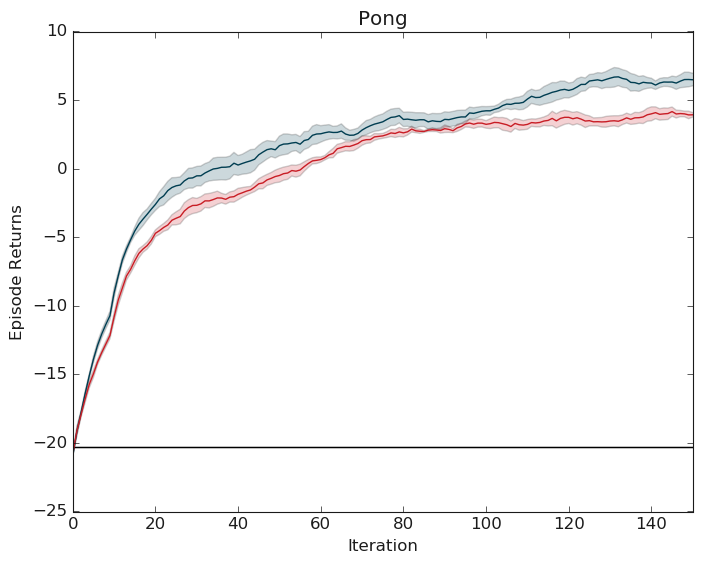}
\includegraphics[width=.18\textwidth]{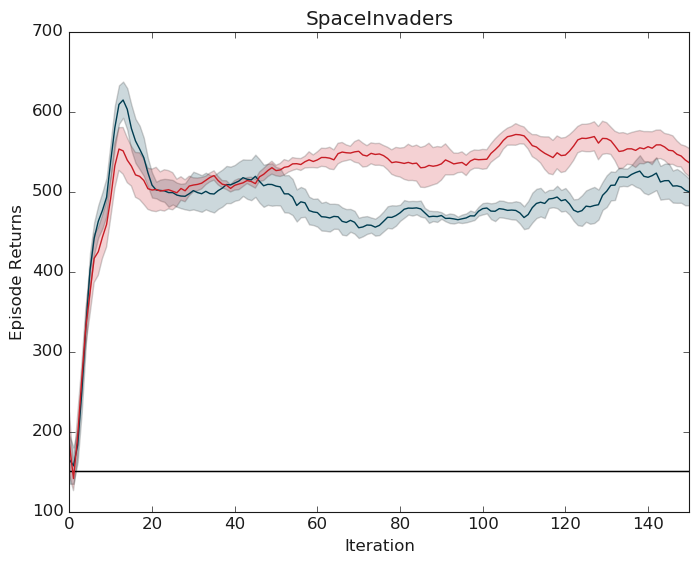}
\includegraphics[width=.18\textwidth]{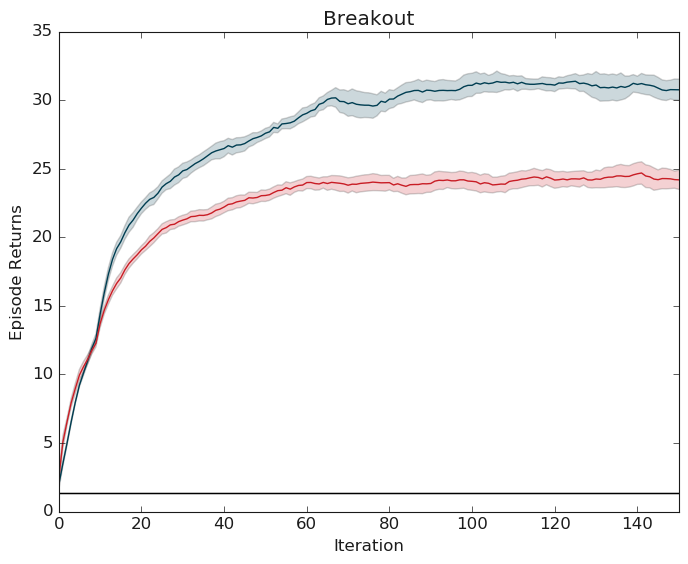}
\caption{ $\eta = 0.02$ with 3 seeds for 150 iterations. Performance of discounted COP-TD with a small target update period of 1000 and $\hgamma=0.99$ on 5 Atari 2600 games.
\label{fig:5_games}}
\end{center}
\end{figure*}

\section{Experimental Results}

In this section we provide empirical evidence demonstrating that our method yields useful benefits in an off-policy, deep reinforcement learning setting. In our experiments we use the Arcade Learning Environment (ALE) \citep{bellemare13arcade}, an RL interface to Atari 2600 games. We consider the single-GPU agent setup pioneered by \citet{mnih15human}. In this setup, the agent uses a replay memory (implemented as a windowed buffer) to store past experience, which it trains on continuously. As a result, much of the agent's learning carries an off-policy flavor.

We focus on a fixed behavior policy, specifically the uniformly random policy. We are interested in learning as good of a control policy as we can. That is, at each step the target policy is the greedy policy with respect to the predicted $Q$-values. While the theory we developed here applies to the policy evaluation case, we believe this setup to be a more practical and more stringent test of the idea. We emphasize that on the ALE, the uniformly random policy generates data that is significantly different from any learned policy; as a result, our experiments exhibit a high degree of off-policyness. To the best of our knowledge, we are the first to consider such a drastic setting.

\begin{figure*}[t]
\begin{center}
\includegraphics[width=.18\textwidth]{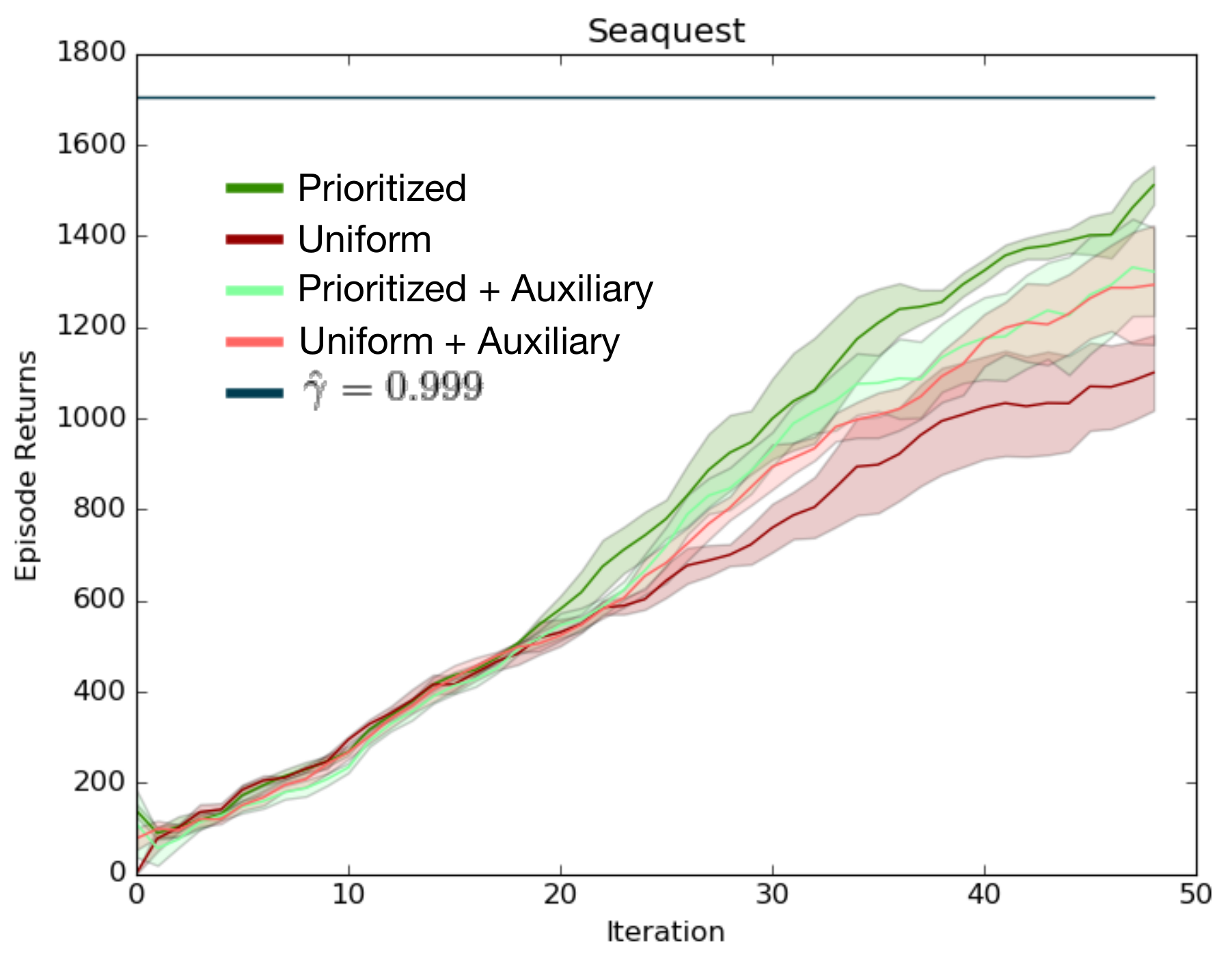}
\includegraphics[width=.18\textwidth]{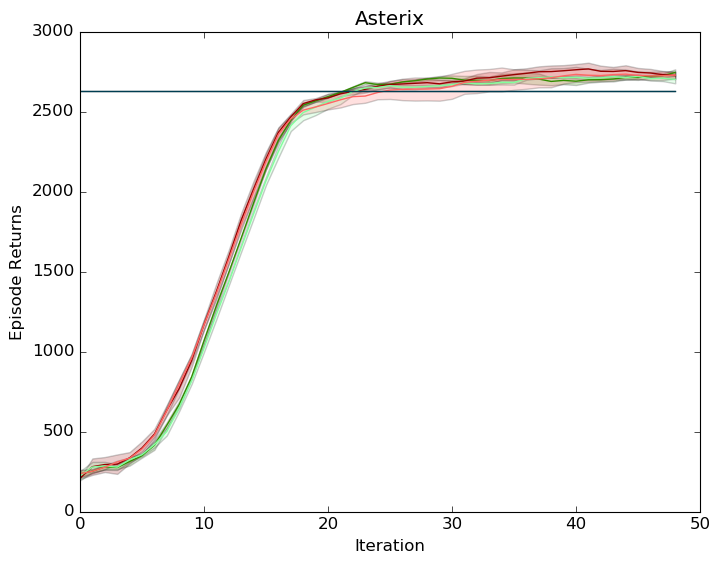}
\includegraphics[width=.18\textwidth]{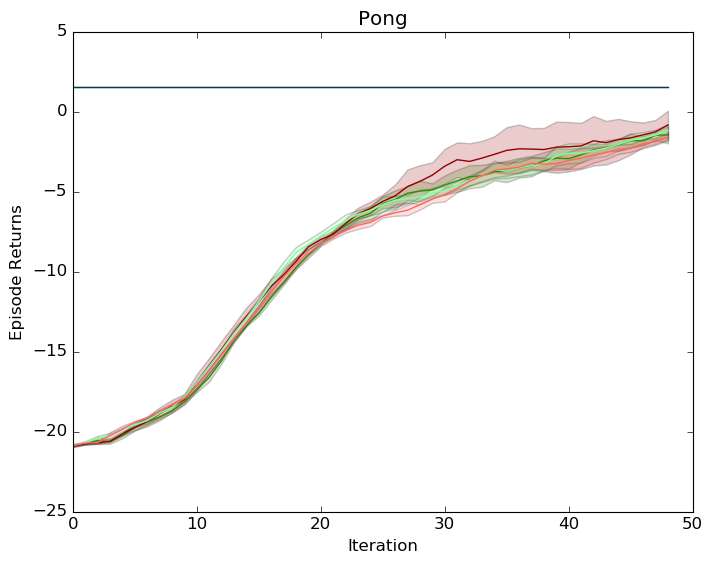}
\includegraphics[width=.18\textwidth]{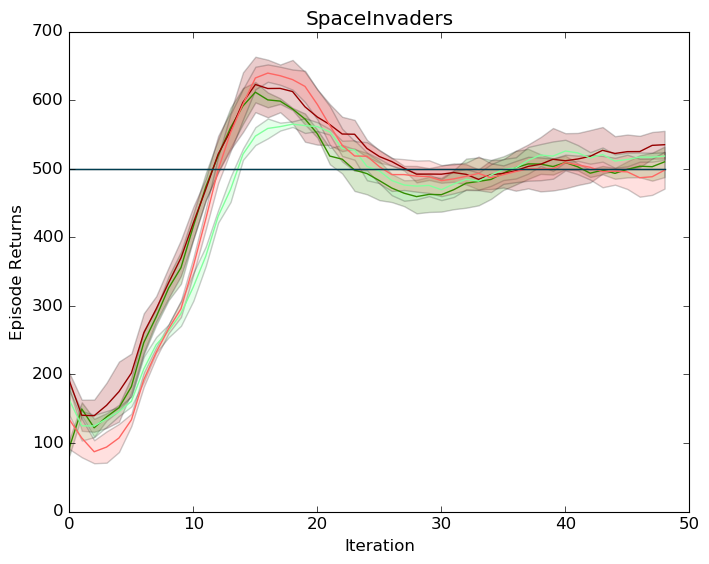}
\includegraphics[width=.18\textwidth]{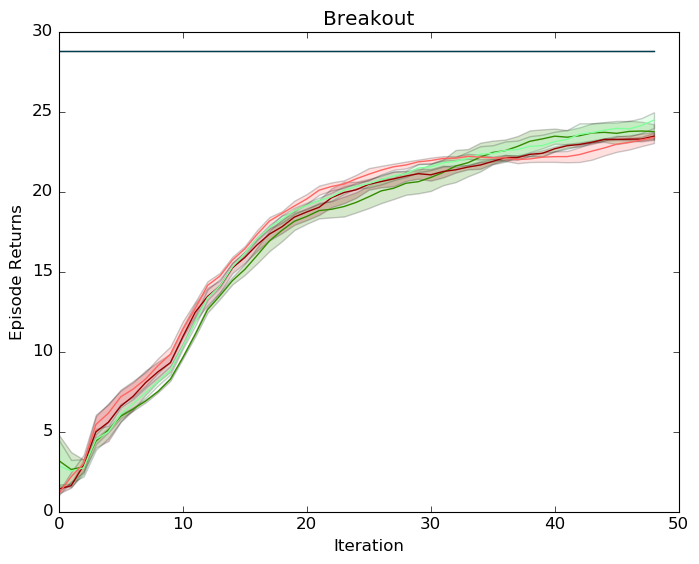}
\caption{ $\eta = 0.002$ with 3 seeds for 50 iterations. 4-way performance comparison using the discounted COP-TD loss as an auxiliary task and TD error prioritization as in \citep{schaul16prioritized}, blue line corresponds to the corrected agent with $\hgamma = 0.999$ at iteration 50.
\label{fig:5_games_aux}}
\end{center}
\end{figure*}

\begin{figure}[tb!]
\begin{center}
\includegraphics[width=0.35\textwidth]{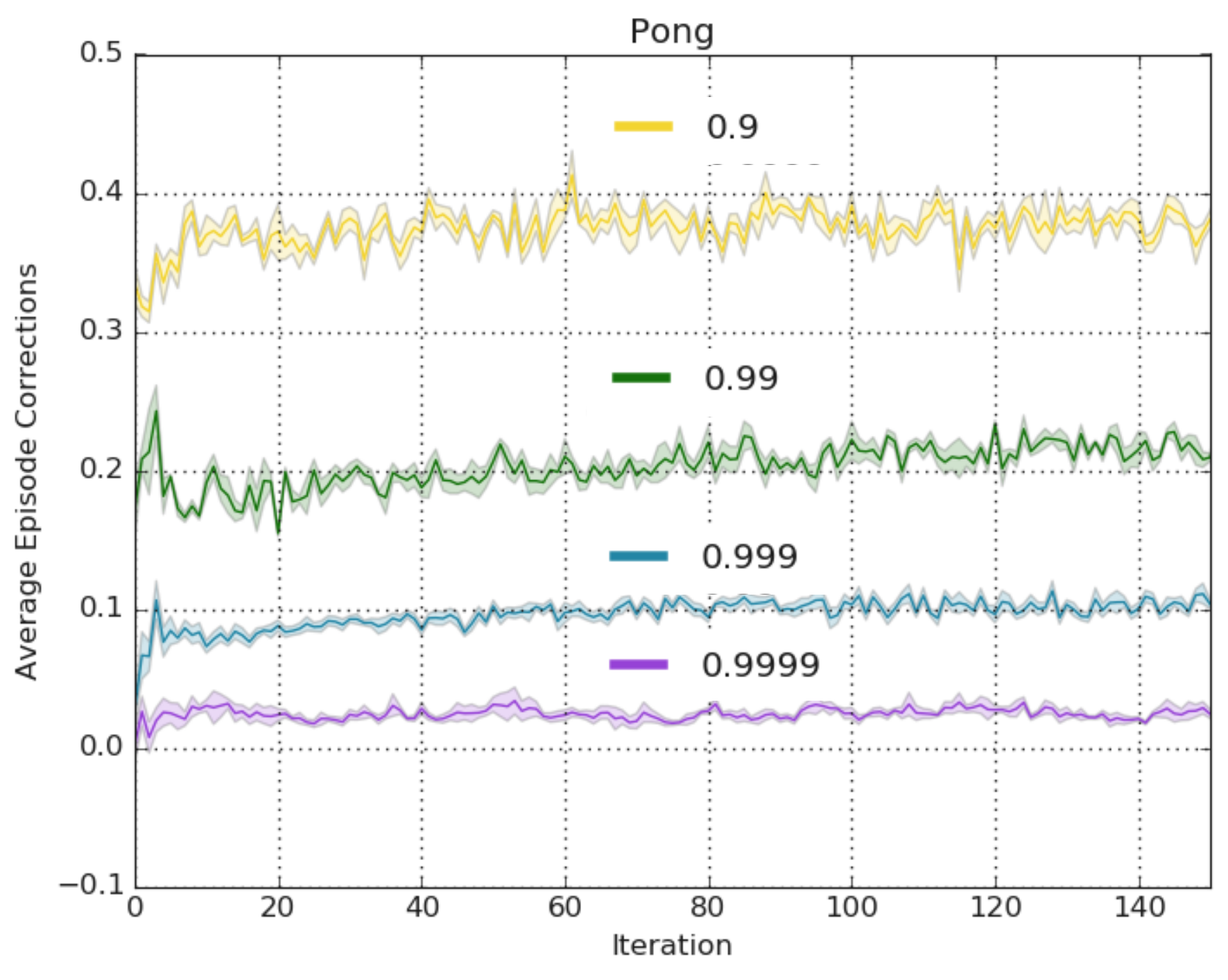}
\end{center}
\caption{ From same runs shown in Figure \ref{fig:sweep}, left. Average predicted ratio in evaluation episode for a set of $\hgamma$ in the game of Pong.
\label{fig:discounted_c}}
\end{figure}

\subsection{Implementation}

Our baseline is the C51 distributional reinforcement learning agent \cite{bellemare17distributional}, and we use published hyperparameters unless otherwise noted. We augment the C51 network by adding an extra head, the ratio model $c(s)$, to the final convolutional layer, whose role is to predict the ratio $\ratio$. The ratio model consists of a two-layer fully-connected network with a ReLU hidden layer of size 512. Whenever a correction term is used as a sampling priority or to compute a bootstrapping target, we clip negative outputs to 0. In what follows $\bar{\theta}$ denotes the parameters of the target network, which includes the ratio model.

In initial experiments we found that multiplicatively reweighting the loss function using covariate shifts hurt performance, likely due to larger gradient variance. Instead, to reweight sample transitions we use a prioritized replay memory \citep{schaul16prioritized} where priorities correspond to the approximate ratios of our model, which in expectation recovers the reweighting. These adjusted sampling priorities result in large portions of the dataset being mostly ignored (i.e. those unlikely under policy $\pi$); hence, the effective size of the data set is reduced and we risk overfitting. In our experiment we mitigated this effect by taking a larger replay memory size (10 million frames) than usual.

Identical to C51, the target policy $\pi_{\bar \theta}$ is the $\epsilon$-greedy policy with respect to the expected value of the distribution output of the target network. We set $\epsilon = 0.1$. The ratio model is trained by adding the squared loss
\begin{equation}\label{eqn:ratio_loss}
\eta \Big (\hgamma c_{\bar{\theta}}(s)\frac{\pi_{\bar{\theta}}(a|s)}{\mu(a|s)} + (1-\hgamma) - c_{\theta}(s')\Big)^2
\end{equation}
to the usual distributional loss of the agent, where $\eta > 0$ is a hyperparameter trading off the two losses. In experiments where we also normalize the ratio model, a third loss (with corresponding weight hyperparameter) is also added.
Preliminary experiments showed that learning the ratio with prioritized sampling led to stability issues, hence we train the ratio model by sampling transitions uniformly from the replay memory. Each training step samples two independent transition batches, prioritized and uniform for the value function and covariate shift respectively.

Since the training is done "backwards in time", no valid transition exists that would update the correction of an initial state $s_0$. This is similar to how there is no valid transition that updates the value of the terminal state in an episodic MDP. However, the distribution of any initial state $s_0$ is policy-independent, and so its ratio is $1$. As a result, we modify the loss \eqnref{ratio_loss} for initial states by replacing the bootstrap target with $1$. A more detailed analysis of our method in the episodic case is provided in the supplementary material.

\begin{figure*}[tb!]
\begin{center}
\includegraphics[width=6.5in]{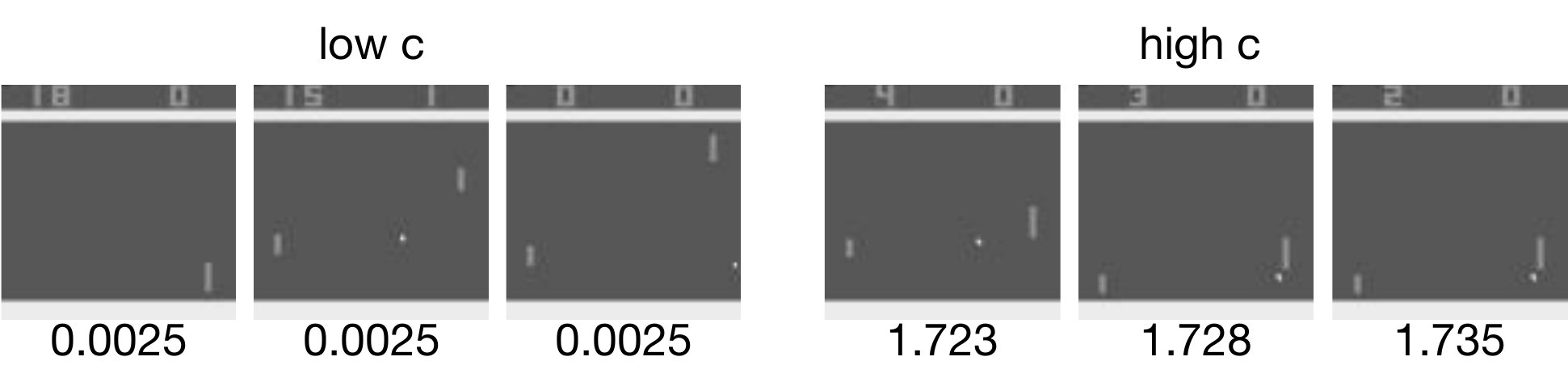}
\end{center}
\caption{Sample states (frames) encountered under the random policy, predicted either as relatively less likely under $\pi$ (low $c$) or relatively more likely under $\pi$ (high $c$). 
The experiment clipped the corrections at 0.0025 which was later found to be unnecessary. \label{fig:pong_high_low}}
\end{figure*}

\subsection{Discounting and Normalization}

We first study the effect of using the discounted COP update rule and/or normalization in the context of the game of Pong.
In Pong, the random agent achieves an average score close to -21, the minimum \citep{bellemare13arcade}. Figure \ref{fig:sweep}, left, compares the learning curves of various values of the discount factor $\hgamma$, the agent with no corrections and the random baseline using a ratio loss weight $\eta = 0.02$.
For discount factors not too large (all except $\hgamma = 0.9999$) better performance compared to the uncorrected baseline is achieved. Using normalization instead (Figure \ref{fig:sweep}, right) also improves performance. However, for high values of the normalization weight, we observed unstable and sometimes divergent behavior.
The runs which can be seen to stop mid plot had diverged in their $c$ outputs (Figure \ref{fig:undiscounted_c}, appendix).
We speculate that the reason for the divergence is higher variance in the loss function, and that a smaller step size might reduce such instabilities. Since using a discount factor proved more stable, has a slight performance advantage and is better understood theoretically than normalization, we will center the rest of the empirical evaluation around it.

In Figure \ref{fig:5_games} we report results for five Atari games chosen on the basis that a random agent playing these games explores the state space at least sufficiently to provide useful data for off-policy learning. We run C51 with discounted corrections with $\hgamma = 0.99$ and $\eta = 0.02$. We observe performance improvements in Seaquest, Breakout and Pong, no noticeable difference in Asterix and a small loss in Space Invaders.

\subsection{Auxiliary tasks and Prioritization}
One might wonder if the performance benefits observed are really due to sampling from a more on-policy distribution. Auxiliary tasks in the form of extra prediction losses have successfully been used to learn better representations and aid the learning of the value function \cite{Jaderberg2016ReinforcementLW}, \cite{Aytar2018PlayingHE}. To validate that the gains originate from correcting the off-policy data distribution as opposed to better representations, we show a modification of the previous experiment where the covariate shift was learned but not used. We also compare how our proposed prioritization scheme compares to the one originally proposed by \citep{schaul16prioritized} who used a function of the TD error to set the priorities. A four-way comparison of auxiliary tasks and TD error prioritization is shown in Figure \ref{fig:5_games_aux} where $\eta = 0.002$. We note that neither using the covariate shift prediction as an auxiliary task nor using TD error based prioritized sampling seemed to make any difference in all the games except SpaceInvaders. Interestingly, the covariate shift auxiliary task in SpaceInvaders helped when uniform sampling was used but hurt under prioritization. 

\subsection{The Effect of the Discount Factor}
To better understand the effect of $\hgamma$ in the learned ratios Figure \ref{fig:discounted_c} shows the average predicted ratio over evaluation episodes (where the $\epsilon$-greedy policy is used instead of the uniform random policy) in the game of Pong for the same set of runs shown in Figure \ref{fig:sweep}, left. Too large a discount ($\hgamma = 0.9999$) causes a decrease in performance, (see Figure \ref{fig:sweep}, left) and one might expect that divergence of the ratios would be the cause. Surprisingly, we observe that $\hgamma = 0.9999$ show no signs of divergence. We emphasize is that the average episode ratio decays monotonically with $\hgamma$, hinting that there is a tendency for the ratios to collapse to 0 which overcomes any potential divergence issues.

\subsection{Qualitative Evaluating the Learned Ratios}

As an additional experiment, we qualitatively assessed the ratio $c(s)$ learned by our deep network. We generated 100,000 sample states by executing the random behavior policy on Pong. From these, we selected the top and bottom 50 states according to the ratio ($c$ value) predicted by an agent trained under the regime of the previous section for 50 million frames. Recall that that $c > 1$ means the network believes the state is more likely under $\pi$ than $\mu$, while when $c < 1$ the converse is true.

Figure \ref{fig:pong_high_low} shows the outcome of this experiment for the top 3 states in terms of $c$-value, and 3 low-$c$ states; additional frames are provided in the supplemental. While our results remain qualitative, we see a clear trend in the selected images. States that are assigned low $c$ correspond to those in which the opponent is about to score a point (2nd and 3rd images). The network also assigns a low $c$ to a state in which the opponent has scored a high number of points (18 out of a possible total of 21) compared to the agent's (0 out of 21). This is indeed an unlikely state under $\pi$: if the trained agent ties the computer opponent, on average, then we expect its score to roughly match that of the opponent.

By contrast, states that are likely under $c$ are those for in which the agent successfully returns the ball. These are naturally unlikely situations under $\mu$, which plays randomly, but likely under the more successful policy $\pi$, which has learned to avoid the negative reward associated with failing to return the ball. 

From this qualitative evidence we conclude that our model learns to clearly distinguish likely and unlikely sample transitions. We believe these results are particularly significant given the relative scarcity of off-policy methods of this kind in deep reinforcement learning.

\section{Conclusion}

In this paper we revisited \citeauthor{hallak17consistent}'s COP-TD algorithm and extended it to be applicable to the deep reinforcement learning setting. While these results on the Atari 2600 suite of games remain preliminary, they demonstrate the practicality of learning the covariate shift in complex settings. We believe our results further open the door to increased sample efficiency in deep reinforcement learning.

We emphasize that the instabilities observed when learning the covariate shift under prioritized sampling point to the importance of the data distribution used to learn the ratios. Which distribution is optimal will be the focus of future work. The covariate shift method is a ``backward'' off-policy method, in the sense that it corrects a mismatch between distributions based on past transitions. It would be interesting to combine our method to ``forward'' off-policy methods such as Retrace \citep{munos16safe}, which have also yielded good results on the Atari 2600 \citep{gruslys18reactor}. Then, it would be interesting to understand whether overfitting does occur due to a smaller effective replay size, and how this can be addressed. Finally, an exciting avenue would be extending the method to the more general case where multiple policies have generated off-policy data, which would allow COP-TD to be applied in the standard control setting.

\section{Acknowledgements}
We would like to thank Dale Schuurmans, James Martens, Ivo Danihelka, Danilo J. Rezende for insightful discussion. We also thank Jacob Buckman, Saurabh Kumar, Robert Dadashi and Nicolas Le Roux for reviewing and improving the draft.

\bibliographystyle{aaai}
\bibliography{off-policy}

\clearpage

\section{Supplementary Material}

\approximationError*
\begin{proof}
\begin{align*}
\Pi_{d} \Vpi  - \Vpi &= \Pi_d \Vpi - \hVpid + \hVpid - \Vpi \\
&= \gamma \Pi_d \Ppi \Vpi - \gamma \Pi_d \Ppi \hVpid + \hVpid - \Vpi \\
&= (I - \gamma \Pi_d \Ppi)  (\hVpid - \Vpi ) \\
\end{align*}

Under the spectral norm assumption that  $\norm{\Pi_d P_{d_\pi}}_{d_\pi} < 1/\gamma$, the matrix $ (I - \gamma \Pi_d \Ppi) $ is invertible. Hence,

\begin{align*}
\hVpid - \Vpi  &= (I - \gamma \Pi_d \Ppi)^{-1}  (\Pi_{d} \Vpi  - \Vpi) \\
&= \sum_{t=0}^\infty (\gamma \Pi_d \Ppi)^t  (\Pi_{d} \Vpi  - \Vpi)
\end{align*}
and
\begin{align*}
\norm{\hVpid - \Vpi}_\dpi  &\leq \sum_{t=0}^\infty \norm{(\gamma \Pi_d \Ppi)^t}_\dpi \norm{ (\Pi_{d} \Vpi  - \Vpi) }_\dpi \\
&\leq \sum_{t=0}^\infty \norm{\gamma \Pi_d \Ppi}_\dpi^t \norm{ (\Pi_{d} \Vpi  - \Vpi) }_\dpi \\
& = \frac{\norm{\Pi_{d} V^\pi - V^\pi}_{d_\pi}}{1- \gamma \norm{\Pi_d \Ppi}_{d_\pi}}. \\
&\leq \frac{\norm{\Pi_{d} V^\pi - V^\pi}_{d_\pi}}{1- \gamma \norm{\Pi_d}_{d_\pi}}.
\end{align*}
Where the second step uses the  the Submultiplicative matrix norm property, proving the inequality. Further, to show that $d = \dpi$ minimizes the error upper bound, we just need to see that both, $ \norm{\Pi_{d} V^\pi - V^\pi}_{d_\pi} $ and $\norm{\Pi_d}_\dpi$ are minimized when $d=\dpi$, where $\norm{\Pi_\dpi}_\dpi = 1$.
\end{proof}

\copOperatorConvergence*
\begin{proof}
Let $k$ be fixed, write $C := \sum_s \dmu(s) c^0(s)$ and $b_0 := \frac{\Ddmu c^0}{C}$, such that the entries of $b_0$ sum to 1. We expand the definition of $c^k$:
\begin{align*}
c^k &= Y c_{k-1} = \dots = Y^k c^0 \\
&= (\Ddmu^{-1} \Ppi^\top \Ddmu)^k c^0 \\
&= \Ddmu^{-1} (\Ppi^\top )^k \Ddmu c^0 . \\
&= \Ddmu^{-1} (\Ppi^\top )^k b_0 C .
\end{align*}
From standard convergence results from Markov chain theory \cite{meyn12markov}, we know that $(\Ppi^\top)^k v \to \dpi$ for any vector $v$ with nonnegative entries and for which $\sum_{s} v(s) = 1$.
If $e$ is the vector of ones, we have
\begin{equation*}
c^k = \Ddmu^{-1} [ \dpi + \epsilon_k e ] C,
\end{equation*}
where $\epsilon_k$ is bounded and $\epsilon_k \to 0$ as $k \to \infty$. Hence
\begin{align*}
c^{k+1} &= C \ratio + C \epsilon_k \Ddmu^{-1} e \\
&\to C \ratio \qquad \text{ as } k \to \infty \text { also}. \qedhere
\end{align*}
\end{proof} 

\fixedPointOfApproximateCop*
\begin{proof}
First we show that if vectors co-linear with $\ratio$ are not in the span of $\Phi$, $\alpha \ratio$ can't be a fixed point of $\Proj Y$. By. contradiction,
\begin{align*}
\norm{\alpha \ratio} = \norm{\Proj Y \alpha \ratio} = \norm{\Proj \alpha \ratio} <  \norm{\alpha \ratio}
\end{align*}
We now show that for a $c \neq 0$ and $c \neq \alpha \ratio$, $c$ can't be a fixed point. Using the fact that the spectral norm of symmetric matrix is it's largest eigenvalue, and that $Y$ is a matrix similar to a transitions matrix and thus has the same set of eigenvalues, the largest of which is $1$, then $\norm{Yx} \leq 1$. In particular, $\norm{Y \ratio} = \norm{\ratio}$ and  $\norm{Y c} \le \norm{c}$. Again, by contradiction,
\begin{align*}
\norm{ c} = \norm{\Proj Y c} = \norm{\Proj} \norm{Yc} < \norm{c}
\end{align*}
\end{proof}

\normalizedOperatorConvergence*
\begin{proof}
Trivially, since the operator $Y$ is linear, the normalized COP operator $\bar{Y}^n = \alpha_n Y^n$. Using Theorem \ref{thm:cop_operator_convergence} we can state that  $\bar{Y}^\infty c^0 = \alpha C \ratio$ and the normalization term ensures that $\alpha = \frac{1}{C}$.
\end{proof}

\hdpStationaryDistribution*
\begin{proof}
By definition, $\hPpi^\top = (1 - \hgamma) \Ppi^\top + \hgamma \dmu e^\top$. First note that $\hdpi$ as defined is a distribution over $\cS$: $\hdpi \ge 0$ and since $e^\top \Ppi^\top = e^\top$,
\begin{align*}
e^\top \hdpi &= (1 - \hgamma) e^\top (I - \hgamma \Ppi^\top)^{-1} \dmu \\
&= (1 - \hgamma) \sum_{i=0}^\infty \gamma^i e^\top \dmu \\
&= 1 .
\end{align*}
We make use of the fact that $e^\top \hdpi = 1$ to write
\begin{align*}
\hat{P}_\pi^T \hat{d}_\pi  &=   \hgamma P_\pi^T   (1-\hgamma) (I - \hgamma P_\pi^T)^{-1}d_\mu + (1-\hgamma) {d_\mu} e^\top \hdpi \\
                                        &=   \hgamma P_\pi^T  (1-\hgamma) \sum_{i=0}^{\infty} (\hgamma P_\pi^T)^i d_\mu + (1-\hgamma) {d_\mu} \\
                                        &=   (1-\hgamma) (I - \hgamma P_\pi^T)^{-1}d_\mu \\
                                        &= \hat{d}_\pi . \qedhere
\end{align*}
\end{proof}

\YgammaFixedPoint*
\begin{proof}
We will prove that $\hratio$ is a fixed point of $\Ygamma$. Its uniqueness will be guaranteed by noting that the $n$-step operator $\Ygamma^n$ is a contraction mapping (Theorem \ref{thm:n_step_discounted_contraction} below) and invoking Banach's fixed point theorem. 

First note that for any $x \in \bR^n$, if $\tfrac{x}{\dmu}$ denotes the vector of ratios then $\Ddmu \tfrac{x}{\dmu} = x$, and in particular $\Ddmu^{-1} \dmu = e$.
We write
\begin{align*}
\Ygamma \frac{\hat{d}_\pi}{d_\mu}  &=  \hgamma \Ddmu^{-1} P_\pi^T D_{d_\mu} \frac{\hat{d}_\pi}{d_\mu}   + (1-\hgamma) e  \\
\Ygamma \frac{\hat{d}_\pi}{d_\mu}  &=  \hgamma \Ddmu^{-1} P_\pi^T \hdpi   + (1-\hgamma) \Ddmu^{-1} d_\mu  \\
                                            &= {D_{d_\mu}}^{-1} ( \hgamma P_\pi^T  {\hat{d}_\pi}   + (1-\hgamma) d_\mu) \\
                                            &= {D_{d_\mu}}^{-1} ( \hgamma P_\pi^T  {\hat{d}_\pi}   + (1-\hgamma) d_\mu e^\top \hdpi) \\
                                            &= {D_{d_\mu}}^{-1} (  \hat{P}_\pi^T \hdpi) \\
                                            &= \frac{\hat{d_\pi}}{d_\mu} . \qedhere \\
\end{align*}
\end{proof}

\discountedCopConvergence*
\begin{proof}
\begin{align*}
\Ygamma^k c^0 &= \sum_{i = 0}^{k-1} (\hgamma Y)^i (1-\hgamma) e + \hgamma^k Y^k c^0 \\ 
&=  (1-\hgamma)  \Ddmu^{-1} \ \sum_{i = 0}^{k-1} (\hgamma \Ppi^T)^i \dmu + \hgamma^k Y^k c^0
\end{align*}
as $ k \to \infty $,

\begin{align*}
\Ygamma^\infty c^0 &=  (1-\hgamma) \Ddmu^{-1}  \sum_{i = 0}^{\infty} (\hgamma Y)^i \dmu \\
&=  (1-\hgamma) \Ddmu^{-1} (I - \hgamma Y)^{-1} \dmu \\
&=  \hratio
\end{align*}\
\end{proof}

\concentrationCoefficient*
\begin{proof}
Let $ z(s') = \sum_{s} \frac{\dmu(s)}{\dmu(s')} \Ppi^n(s'|s) $, the normalization term needed to apply Jensen's inequality. Note that $K_{\pi, \mu, n} =  \sup_{s' \in \cS} z(s') $.

\begin{align*}
\norm{Y x}_\dmu^2 &=  \sum_{s'} d_\mu (s') \bigg(\sum_s \frac{d_\mu(s)}{d_\mu(s')} \Ppi^n(s'| s) x(s) \bigg)^2 \\
&=  \sum_{s'} d_\mu (s') z(s')^2 \bigg(\sum_s \frac{1}{z(s')} \frac{d_\mu(s)}{d_\mu(s')} \Ppi^n(s'| s) x(s) \bigg)^2 \\
&\leq  \sum_{s'} d_\mu (s') z(s') \sum_s \frac{d_\mu(s)}{d_\mu(s')} \Ppi^n(s'| s) x(s)^2  \\
&\leq K_{\pi, \mu, n}  \sum_{s}  \dmu(s)  x(s)^2  \sum_{s'} \Ppi^n(s'| s) \\
&= K_{\pi, \mu, n}  \sum_{s}  \dmu(s)  x(s)^2  = K_{\pi, \mu, n}  \norm{x}_\dmu^2
\end{align*} 
We now show that the series is upper bounded by a constant.
\begin{align*}
K_{\pi, \mu, n} &= \sup_{s' \in \cS} \sum_{s} \frac{\dmu(s)}{\dpi(s)} \frac{\dpi(s)}{\dmu(s')} \Ppi^n(s'|s) \\
 &\leq  \norm{ \frac{\dmu(s)}{\dpi(s)} }_\infty  \sup_{s' \in \cS} \sum_{s}  \frac{\dpi(s)}{\dmu(s')} \Ppi^n(s'|s) \\
 &=  \norm{ \frac{\dmu(s)}{\dpi(s)} }_\infty  \sup_{s' \in \cS}  \frac{\dpi(s')}{\dmu(s')} \\
&\leq  \norm{ \frac{\dmu(s)}{\dpi(s)} }_\infty \norm{ \frac{\dpi(s)}{\dmu(s)} }_\infty = K_{\pi, \mu}.
\end{align*} 
\end{proof}

\nStepDiscountedContraction*
\begin{proof}
\begin{align*}
\norm{\Y^n c - \hratio}_{\dmu}  &= \norm{\Y^n c - \Y^n \hratio}_{\dmu} \\
&= \norm{ \hgamma^n Y^n c - \hgamma^n Y^n \hratio}_{\dmu} \\
&\leq \hgamma^n \sqrt{K_{\pi, \mu, n}} \norm{c - \hratio}_{\dmu}
\end{align*} 
The rest follows easily.
\end{proof}

\gradientEstimateNormalization*
\begin{proof}
\begin{align*}
&\expect \bigg[\tfrac{1}{m} \sum_{i=1}^m \big ( \tfrac{1}{m - 1} \sum_{j \ne i} c(s_j) - 1\big ) \grad c(s_i) \bigg] = \\
&= \tfrac{1}{m} \sum_{i=1}^m  \expect \bigg[ \big ( \tfrac{1}{m - 1} \sum_{j \ne i} c(s_j) - 1\big ) \grad c(s_i) \bigg] \\
&= \tfrac{1}{m} \sum_{i=1}^m  \expect  \big [ c(s) - 1\big ] ] \expect[ \grad c(s) ] \\
&= \grad \expect [ ( c(s) - 1 \big )^2 ] ] =  \grad \cL(c) \\
\end{align*}
\end{proof}

\subsection{Episodic  COP-TD}
We show that for the episodic MDP case, there exists an equivalent $\Y$ operator that can be used to learn the Covariate Shift.

An episodic MDP has the same definition as a standard MDP except it has a starting distribution $d_0(s)$ denoting the probability of being in state $s$ at time step $0$ and that the transition function $P_\pi$ of any policy $\pi$ has a spectral radius smaller than $1$ , (e.i. the probability of reaching the terminal state after $\infty$ steps is $1$).

\begin{lem}
Let $d_0$ denote the starting state distribution vector and $d_\pi =  \sum_{i=0}^{\infty} (P_\pi^T)^i d_0 $ be the unnormalized probability vector of visiting a state. Then

\begin{equation*}
d_\pi = P_\pi^T d_\pi + d_0
\end{equation*}
\end{lem}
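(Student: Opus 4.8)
The plan is to recognize $d_\pi$ as a Neumann series and exploit its defining self-similarity. First I would invoke the episodic assumption stated in the definition above: the transition function $P_\pi$ (and hence $P_\pi^\top$) has spectral radius strictly less than $1$. This guarantees that the series $\sum_{i=0}^\infty (P_\pi^\top)^i d_0$ converges absolutely, so $d_\pi$ is a well-defined finite vector and the term-by-term manipulations below are justified.

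The core of the argument is a single index shift. I would left-multiply the series by $P_\pi^\top$ to obtain $P_\pi^\top d_\pi = \sum_{i=0}^\infty (P_\pi^\top)^{i+1} d_0 = \sum_{j=1}^\infty (P_\pi^\top)^j d_0$, which is the same series with only the $i=0$ term (namely $d_0$) removed. Adding that missing term back then gives $P_\pi^\top d_\pi + d_0 = \sum_{j=0}^\infty (P_\pi^\top)^j d_0 = d_\pi$, which is exactly the claimed fixed-point equation.

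Equivalently, and perhaps more cleanly, I could note that the convergent series sums to $d_\pi = (I - P_\pi^\top)^{-1} d_0$, the inverse existing precisely because the spectral radius is below $1$, and then left-multiply by $(I - P_\pi^\top)$ to get $(I - P_\pi^\top) d_\pi = d_0$, which rearranges directly to $d_\pi = P_\pi^\top d_\pi + d_0$. Either route is a two-line computation.

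There is essentially no substantive obstacle here: the only point requiring care is the convergence (equivalently, invertibility of $I - P_\pi^\top$) justification, and this is handed to us directly by the episodic MDP definition's spectral-radius condition. Conceptually, the result is the episodic analogue of the stationarity identity $d = dP$ from the ergodic setting, with the starting distribution $d_0$ now appearing as a source term; this is what licenses replacing the bootstrap target by the reset-to-$d_0$ behavior used for initial states in the implementation.
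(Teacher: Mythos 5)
Your proof is correct: the index-shift (equivalently, Neumann-series/resolvent) argument is exactly the right and essentially only argument, and the spectral-radius condition you invoke is precisely what justifies convergence. The paper itself leaves this lemma unproven, noting only that episodic proofs ``follow closely'' their continuing-case counterparts, and your manipulation is the same one used in the paper's proof of the discounted stationary distribution $\hdpi = (1-\hgamma)(I - \hgamma \Ppi^\top)^{-1}\dmu$ (multiply the series by the transition matrix and add back the source term), so your approach coincides with the intended one.
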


\begin{assumption}\label{single_starting_state}
The MDP has a single starting state $s_0$, $d_0(s_0) = 1$ which is impossible to return to $$P(s_0 | s, a) = 0 \forall s, a$$.
\end{assumption}
Although this assumption might seem strong at first, we note that it is easy for any MDP to satisfy it by including a beginning state that transitions to the original starting states according to the original starting distribution independent on the action taken. This assumption has two main desirable properties. First, $\ratio (s_0) = 1 \forall \pi, \mu$ and second $\Ddmu d_0 = d_0$.

\begin{lem}
Under Assumption \ref{single_starting_state} and letting $c(s_0) = 1$, the operator $Y$ defined as:
\begin{equation}\label{eqn:epi_cop_operator}
Y c = \Ddmu^{-1} \Ppi^\top \Ddmu c  + d_0.
\end{equation}
Can be written in expectation form as:
\begin{equation*}
(Yc)(s') := \expect_{s \sim \bar{\dmu}, a \sim \mu} \left [ \frac{\pi(a \cbar s)}{\mu (a \cbar s)} c(s) \cbar s' \right ] .
\end{equation*}
Where $\bar{d_\mu} = \frac{d_\mu}{|d_\mu|}$ denotes the (normalized) visitation probability vector.
\end{lem}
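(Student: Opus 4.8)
The plan is to start from the expectation (conditional-expectation) form and reduce it, component by component, to the matrix form \eqnref{epi_cop_operator}. First I would expand the conditional expectation as a ratio of a joint expectation to the successor-state marginal. Writing the law of a sampled transition as $\bar{\dmu}(s)\,\mu(a\cbar s)\,P(s'\cbar s,a)$, the importance weight $\tfrac{\pi(a\cbar s)}{\mu(a\cbar s)}$ cancels the $\mu$ factor in the numerator and turns it into $\pi$, while the (unweighted) denominator is just the probability of observing $s'$. This gives
\begin{equation*}
(Yc)(s') = \frac{\sum_s \bar{\dmu}(s)\,\Ppi(s'\cbar s)\,c(s)}{\sum_s \bar{\dmu}(s)\,P_\mu(s'\cbar s)} .
\end{equation*}
The common normalization factor $1/|\dmu|$ hidden in $\bar{\dmu}$ cancels between numerator and denominator, so in matrix notation the numerator is $(\Ppi^\top \Ddmu c)(s')$ and the denominator is $(P_\mu^\top \dmu)(s')$.

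The crucial step is to evaluate the successor-state marginal. Unlike the ergodic case, here $\dmu$ is the \emph{unnormalized visitation vector} and is not stationary for $P_\mu$; instead the preceding lemma supplies the episodic visitation identity $P_\mu^\top \dmu = \dmu - d_0$. Substituting yields
\begin{equation*}
(Yc)(s') = \frac{(\Ppi^\top \Ddmu c)(s')}{\dmu(s') - d_0(s')} .
\end{equation*}
This is exactly where the additive $d_0$ term in \eqnref{epi_cop_operator} comes from: it compensates for the deficit $d_0$ that the starting distribution introduces into the successor marginal.

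I would then finish with a split on $s'$. For $s' \ne s_0$ we have $d_0(s') = 0$, so the denominator is $\dmu(s')$ and the expression equals $\tfrac{(\Ppi^\top \Ddmu c)(s')}{\dmu(s')} = (\Ddmu^{-1}\Ppi^\top \Ddmu c)(s')$; since $d_0(s')=0$ the matrix form $\Ddmu^{-1}\Ppi^\top\Ddmu c + d_0$ returns the same value. For $s' = s_0$, Assumption \ref{single_starting_state} gives $\Ppi(s_0\cbar s)=0$ for all $s$, so the numerator $(\Ppi^\top\Ddmu c)(s_0)$ vanishes, and $\dmu(s_0)=d_0(s_0)=1$; the successor marginal vanishes, meaning $s_0$ is never drawn as a successor, so its value is pinned by the convention $c(s_0)=1$. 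The matrix form returns $\tfrac{0}{1}+1=1$, matching the convention, which closes the argument.

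I expect the $s_0$ component to be the main obstacle. Because the successor marginal is $\dmu - d_0$ rather than $\dmu$, the naive conditional expectation degenerates to $\tfrac00$ at $s_0$ and cannot be defined by sampling; it must be resolved through Assumption \ref{single_starting_state} together with the convention $c(s_0)=1$. Reconciling this boundary behavior with the additive $d_0$ correction in \eqnref{epi_cop_operator} is the heart of the proof, the remainder being routine importance-sampling bookkeeping.
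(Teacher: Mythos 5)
Your proof is correct and follows the route the paper intends: the paper gives no explicit proof of this lemma (it only remarks that the episodic proofs ``follow closely'' their continuing-case counterparts), and your derivation is precisely that adaptation --- the conditional-expectation/importance-sampling computation from the continuing case, with stationarity $P_\mu^\top \dmu = \dmu$ replaced by the episodic visitation identity $P_\mu^\top \dmu = \dmu - d_0$ supplied by the preceding lemma. Your explicit treatment of the boundary $s' = s_0$, where the successor marginal vanishes under Assumption \ref{single_starting_state} and the additive $d_0$ term together with the convention $c(s_0) = 1$ pins $(Yc)(s_0) = 1$, resolves the one point the paper leaves implicit.
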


All the proofs in the episodic case follow closely that of their counterparts in the continuing case.

\begin{cor}
The discounted operator for the episodic case:
\begin{equation}\label{eqn:epi_discounted_cop_operator}
\Y c = \hgamma (\Ddmu^{-1} \Ppi^\top \Ddmu c  + d_0) + (1-\hgamma)e.
\end{equation}
Can also be written in expectation form as:
\begin{equation*}
(\Y c)(s') := \expect_{s \sim \bar{\dmu}, a \sim \mu} \left [ \hgamma \frac{\pi(a \cbar s)}{\mu (a \cbar s)} c(s) \cbar s' + 1-\hgamma \right ] .
\end{equation*}
\end{cor}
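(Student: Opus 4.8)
The plan is to reduce the claim to the preceding Lemma together with the definition of the discounted operator, exactly as in the continuing case. First I would observe that the parenthesized term $\Ddmu^{-1} \Ppi^\top \Ddmu c + d_0$ is precisely the undiscounted episodic COP operator $Y c$ of the previous Lemma. Thus the displayed vector form is nothing but $\Y c = \hgamma Y c + (1-\hgamma) e$, which mirrors the continuing-case definition $\Ygamma c = \hgamma Y c + (1-\hgamma) e$.

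Next I would invoke that Lemma, which already supplies the expectation form of the undiscounted operator, namely $(Y c)(s') = \expect_{s \sim \bar{\dmu}, a \sim \mu}[\frac{\pi(a \cbar s)}{\mu(a \cbar s)} c(s) \cbar s']$. Reading the vector identity coordinatewise at a fixed successor state $s'$ and using that $e$ has entry $1$ everywhere, so that $((1-\hgamma) e)(s') = 1 - \hgamma$, gives
\begin{equation*}
(\Y c)(s') = \hgamma \, \expect_{s \sim \bar{\dmu}, a \sim \mu}\left[\frac{\pi(a \cbar s)}{\mu(a \cbar s)} c(s) \cbar s'\right] + (1 - \hgamma).
\end{equation*}

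Finally I would fold the scalar $\hgamma$ and the additive constant $1-\hgamma$ into the conditional expectation. Since $1-\hgamma$ does not depend on the sampled pair $(s,a)$, linearity of expectation yields
\begin{equation*}
(\Y c)(s') = \expect_{s \sim \bar{\dmu}, a \sim \mu}\left[\hgamma \frac{\pi(a \cbar s)}{\mu(a \cbar s)} c(s) + 1 - \hgamma \cbar s'\right],
\end{equation*}
which is the asserted expectation form. I do not expect any genuine obstacle: the only two points needing care are recognizing the parenthesized matrix expression as the episodic $Y$ operator and checking that the constant vector $(1-\hgamma)e$ contributes the scalar $1-\hgamma$ in every coordinate. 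As the authors remark, the whole argument is the exact episodic analogue of the manipulation already used to move between $\Ygamma c = \hgamma Y c + (1-\hgamma)e$ and its expectation form in the continuing setting.
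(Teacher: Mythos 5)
Your proposal is correct and matches the paper's intent: the paper gives no explicit proof for this corollary, remarking only that the episodic proofs ``follow closely that of their counterparts in the continuing case,'' and your reduction --- recognizing the parenthesized term as the episodic $Y$ from the preceding lemma, writing $\Y c = \hgamma Y c + (1-\hgamma)e$, and folding the constant $(1-\hgamma)$ into the conditional expectation coordinatewise --- is exactly that argument. Just note that invoking the preceding lemma implicitly carries its hypotheses (Assumption 1 and $c(s_0)=1$), which the corollary inherits.
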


\begin{lem}
Under Assumption \ref{single_starting_state}, the operator $Y$ has a unique fixed point $\ratio$
\begin{equation*}
Y \ratio =  \ratio
\end{equation*}
\end{lem}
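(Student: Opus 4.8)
The plan is to verify directly that $\ratio$ solves the fixed-point equation, and then to deduce uniqueness from the affine structure of the episodic operator \eqnref{epi_cop_operator} together with the spectral-radius property that defines the episodic setting. The argument mirrors the continuing-case fixed-point proof (Lemma \ref{lem:Ygamma_fixed_point}), with the additive $d_0$ term playing the role formerly played by the reset mass.

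For the existence half I would substitute $c = \ratio$ into $Y$ and simplify using two identities supplied by the setup. First, since $(\Ddmu \ratio)(s) = \dmu(s)\tfrac{\dpi(s)}{\dmu(s)} = \dpi(s)$, we have $\Ddmu \ratio = \dpi$, so that $Y\ratio = \Ddmu^{-1}\Ppi^\top \dpi + d_0$. Second, I would invoke the visitation recursion $\dpi = \Ppi^\top \dpi + d_0$ established in the first episodic lemma to replace $\Ppi^\top \dpi = \dpi - d_0$, obtaining $Y\ratio = \Ddmu^{-1}(\dpi - d_0) + d_0 = \ratio - \Ddmu^{-1}d_0 + d_0$, where I used $\Ddmu^{-1}\dpi = \ratio$ componentwise. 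The computation then closes via Assumption \ref{single_starting_state}, which guarantees $\Ddmu d_0 = d_0$ and hence $\Ddmu^{-1}d_0 = d_0$: because $s_0$ cannot be re-entered, the unnormalized visitation mass at $s_0$ equals its initial mass, i.e.\ $\dmu(s_0)=1$. Thus $Y\ratio = \ratio - d_0 + d_0 = \ratio$.

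For uniqueness I would note that the episodic operator is affine, $Yc = Ac + d_0$ with $A := \Ddmu^{-1}\Ppi^\top\Ddmu$, so every fixed point solves $(I - A)c = d_0$. Since $A$ is conjugate to $\Ppi^\top$ by $\Ddmu$, it shares the eigenvalues of $\Ppi$, and the episodic assumption that $\Ppi$ has spectral radius strictly below $1$ forces $1 \notin \mathrm{spec}(A)$; hence $I - A$ is invertible and the fixed point is unique. As a consistency check one may expand $(I-A)^{-1} = \sum_{i\ge 0} A^i$ and use $\Ddmu d_0 = d_0$ to recover $c = \Ddmu^{-1}\sum_{i\ge 0}(\Ppi^\top)^i d_0 = \Ddmu^{-1}\dpi = \ratio$, matching the definition of $\dpi$.

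The main obstacle is bookkeeping rather than analysis: one must correctly track the role of $d_0$ and the special state $s_0$, and in particular justify $\Ddmu^{-1}d_0 = d_0$ from Assumption \ref{single_starting_state}. This is the single place where non-returnability of $s_0$ enters, and it is precisely what makes the additive $d_0$ cancel against the deficit $\dpi - \Ppi^\top\dpi$. The only other delicate point is recognizing that here it is the episodic spectral-radius hypothesis---not ergodicity, as in the continuing case---that removes the eigenvalue $1$ and turns the affine equation into one with a unique solution.
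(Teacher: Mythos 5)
Your proof is correct. Note first that the paper gives no standalone proof of this lemma; it states only that the episodic proofs ``follow closely'' their continuing-case counterparts, the counterpart here being Lemma \ref{lem:Ygamma_fixed_point}. Your existence argument is exactly that faithful translation: you verify the fixed point by direct substitution, with the visitation recursion $\dpi = \Ppi^\top \dpi + d_0$ playing the role of stationarity $\hdpi = \hPpi^\top \hdpi$, and the cancellation $\Ddmu^{-1} d_0 = d_0$ (from Assumption \ref{single_starting_state}, via $\dmu(s_0)=1$) playing the role of $\Ddmu^{-1}\dmu = e$ in the continuing proof. Where you genuinely depart from the paper is uniqueness. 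The continuing-case proof invokes Banach's fixed point theorem after the $n$-step contraction bound of Theorem \ref{thm:n_step_discounted_contraction}, which itself rests on the concentration-coefficient machinery of Lemma \ref{lem:concentration_coefficient} and on $\hgamma < 1$. You instead exploit the affine structure $Yc = Ac + d_0$ with $A = \Ddmu^{-1}\Ppi^\top\Ddmu$: similarity to $\Ppi^\top$ together with the episodic hypothesis that $\Ppi$ has spectral radius below $1$ makes $I - A$ invertible, so the fixed-point equation is a nonsingular linear system with exactly one solution. This is arguably the cleaner route here, since no norm bound or discount factor is needed, and it correctly isolates the structural point that in the episodic case it is sub-stochasticity (absorption), not discounting, that removes the eigenvalue $1$ which gives the continuing undiscounted $Y$ an entire line of fixed points. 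What the paper's contraction route buys instead is a quantitative convergence rate for the iterates $c^{k+1} = Yc^k$ in the $\dmu$-weighted norm, which matters for the learning-dynamics interpretation; your invertibility argument yields uniqueness (and, since $A$ has spectral radius below $1$, convergence of the iteration as well) but no explicit rate in a specified norm. Your closing computation $(I-A)^{-1}d_0 = \Ddmu^{-1}\sum_{i \ge 0}(\Ppi^\top)^i d_0 = \Ddmu^{-1}\dpi = \ratio$ is a nice bonus, re-deriving existence and uniqueness in a single line.
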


\begin{figure*}[tb!]
\begin{center}
\includegraphics[width=0.35\textwidth]{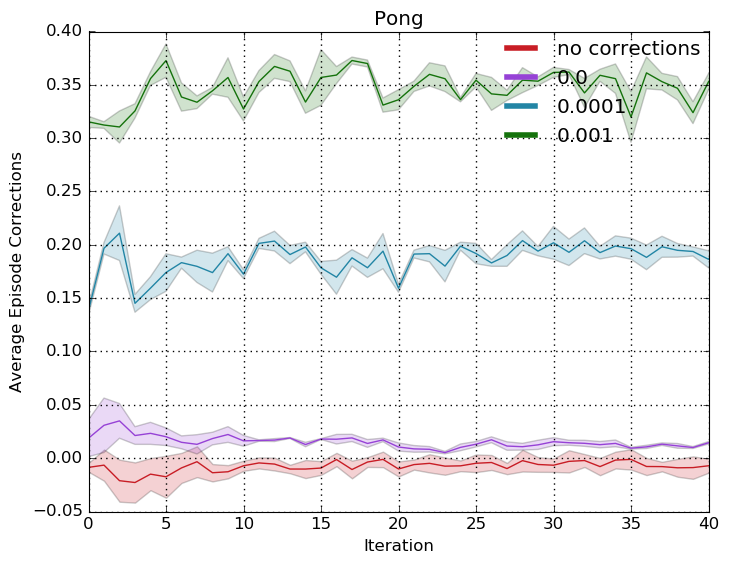}
\includegraphics[width=0.35\textwidth]{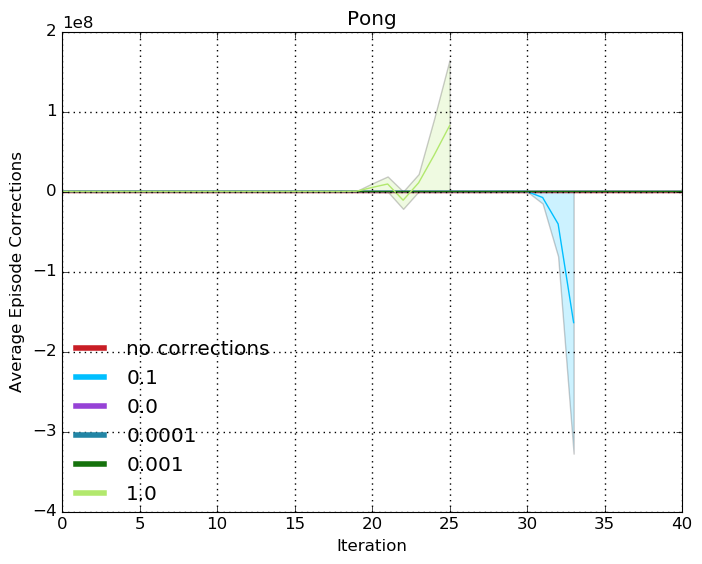}
\end{center}
\caption{\textbf{Left.} Compare the average c in the undiscounted case.
\label{fig:undiscounted_c}}
\end{figure*}

\begin{figure*}[t]
\begin{center}
\includegraphics[width=.18\textwidth]{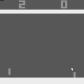}
\includegraphics[width=.18\textwidth]{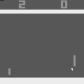}
\includegraphics[width=.18\textwidth]{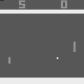}
\includegraphics[width=.18\textwidth]{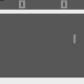}
\includegraphics[width=.18\textwidth]{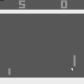}
\includegraphics[width=.18\textwidth]{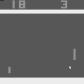}
\includegraphics[width=.18\textwidth]{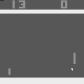}
\includegraphics[width=.18\textwidth]{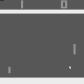}
\includegraphics[width=.18\textwidth]{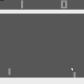}
\includegraphics[width=.18\textwidth]{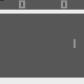}
\includegraphics[width=.18\textwidth]{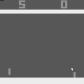}
\includegraphics[width=.18\textwidth]{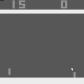}
\includegraphics[width=.18\textwidth]{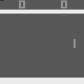}
\includegraphics[width=.18\textwidth]{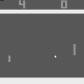}
\includegraphics[width=.18\textwidth]{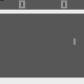}
\caption{Pong frames with high $c$-value.}
\end{center}
\end{figure*}

\begin{figure*}[t]
\begin{center}
\includegraphics[width=.18\textwidth]{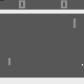}
\includegraphics[width=.18\textwidth]{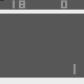}
\includegraphics[width=.18\textwidth]{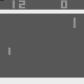}
\includegraphics[width=.18\textwidth]{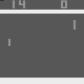}
\includegraphics[width=.18\textwidth]{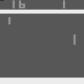}
\includegraphics[width=.18\textwidth]{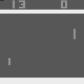}
\includegraphics[width=.18\textwidth]{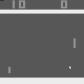}
\includegraphics[width=.18\textwidth]{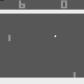}
\includegraphics[width=.18\textwidth]{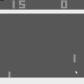}
\includegraphics[width=.18\textwidth]{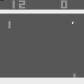}
\includegraphics[width=.18\textwidth]{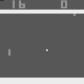}
\includegraphics[width=.18\textwidth]{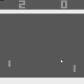}
\includegraphics[width=.18\textwidth]{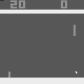}
\includegraphics[width=.18\textwidth]{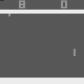}
\includegraphics[width=.18\textwidth]{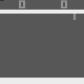}
\caption{Pong frames with low $c$-value.}
\end{center}
\end{figure*}

\end{document}